\newtheorem{definition}{Definition}[section]
\newtheorem{theorem}{Theorem}
\newtheorem{lemma}{Lemma}
\newtheorem{assumption}{Assumption}
\def\eqref#1{equation~\ref{#1}}
\def\1{\bm{1}}
\DeclareMathAlphabet{\mathsfit}{\encodingdefault}{\sfdefault}{m}{sl}
\SetMathAlphabet{\mathsfit}{bold}{\encodingdefault}{\sfdefault}{bx}{n}
\DeclareMathOperator*{\argmin}{arg\,min}
\newenvironment{Figure}
  {\par\medskip\noindent\minipage{\linewidth}}
  {\endminipage\par\medskip}
\newcommand{\screenprojections}{\texttt{Screen}}
\newcommand{\screenprojectionsfinite}{\texttt{Screen\_Finite\_Data}}
\newcommand{\scoreanddiscard}{\texttt{score\_and\_discard}}
\newcommand{\loglikelihoodscore}{\texttt{loglikelihood\_score}}
\newcommand{\outerboundary}{\partial_{\text{out}}}
\newcommand{\expansivecausal}{\textit{Expansive Causal}}
\newcommand{\pef}{\textit{PEF}}
\newcommand{\disjoint}{\textit{Disjoint}}
\newcommand{\edgecover}{\textit{Edge Cover}}
\newcommand{\nopartition}{\textit{No Partition}}
\newcommand{\expansivecausalfixedcomm}{\textit{Expansive Causal 100 Comms}}
\newcommand{\edgecoverfixedcomm}{\textit{Edge Cover 100 Comms}}
\newcommand\circdir{\mathrel{\circ\mkern-2.5mu{\rightarrow}}}
\newcommand\circundir{\mathrel{\circ\mkern-3mu{-}\mkern-3mu\circ}}
\newcommand\stardir{\mathrel{*\mkern-2.5mu{\rightarrow}}}
\newcommand\starleftdir{\mathrel{\mkern-2.5mu{\leftarrow}*}}
\newcommand\starundir{\mathrel{*\mkern-3mu{-}\mkern-3mu *}}
\title{%
    Causal Discovery over High-Dimensional Structured 
    Hypothesis Spaces with  Causal Graph Partitioning
}
\author{%
    \name Ashka Shah \email shahashka@uchicago.edu \\
    \addr Department of Computer Science\\
    University of Chicago
    \AND
    \name Adela DePavia \email adepavia@uchicago.edu \\
    \addr Committee on Computational and Applied Mathematics \\
    University of Chicago
    \AND
    \name Nathaniel Hudson \email hudsonn@uchicago.edu \\
    \addr Department of Computer Science \\
    University of Chicago
    \AND
    \name Ian Foster \email foster@uchicago.edu \\
    \addr Department of Computer Science \\
    University of Chicago
    \AND
    \name Rick Stevens \email stevens@cs.uchicago.edu \\
    \addr Department of Computer Science \\
    University of Chicago
}
\begin{document}

\maketitle

\begin{abstract}
    The aim in many sciences is to understand the mechanisms that underlie the observed distribution of variables, starting from a set of initial hypotheses. Causal discovery allows us to infer mechanisms as sets of cause and effect relationships in a generalized way---without necessarily tailoring to a specific domain. Causal discovery algorithms search over a structured hypothesis space, defined by the set of Directed Acyclic Graphs~(DAG), to find the graph that best explains the data. For high-dimensional problems, however, this search becomes intractable and scalable algorithms for causal discovery are needed to bridge the gap. In this paper, we define a novel causal graph partition that allows for divide-and-conquer causal discovery with theoretical guarantees under the Maximal Ancestral Graph~(MAG) class. We leverage the idea of a superstructure---a set of learned or existing candidate hypotheses---to partition the search space. We prove under certain assumptions that learning with a causal graph partition always yields the Markov Equivalence Class of the true causal graph. We show our algorithm achieves comparable accuracy and a faster time to solution for biologically-tuned synthetic networks and networks up to ${10^4}$ variables. This makes our method applicable to gene regulatory network inference and other domains with high-dimensional structured hypothesis spaces. Code is available at \texttt{https://github.com/shahashka/causal\_discovery\_via\_partitioning}.
\end{abstract}


\section{Introduction}
\label{sec:introduction}
Causal discovery aims to find meaningful causal relationships using large-scale observational data. 
Causal relationships are often represented as a graph, where nodes are random variables and directed edges are cause-effect relationships between random variables \citep{spirtes2000causation}. 
Causal graphs 
have high expressive power as they allow us to investigate complex relationships between many variables simultaneously---making them relevant for many problems in science, economics, and decision systems \citep{pearl1995causal}. 


Exploring the graph search space to find the causal graph is an NP-hard problem. 
Causal discovery algorithms have benefited from some performance enhancements and parallel strategies~\citep{ramsey2015scaling, laborda2023ring, lee2019parallel}. 
Recent work explores a distributed divide-and-conquer version of causal discovery by partitioning variables into subsets, locally estimating graphs, and merging graphs to resolve a causal graph. 
Existing divide-and-conquer methods do not provide theoretical guarantees for consistency; meaning in the infinite data limit they do not necessarily find the Markov Equivalence Class of the true causal graph. Existing algorithms also rely on an extra learning step to merge graphs which can be computationally expensive. Finally, these algorithms ignore the violations to causal assumptions when learning on subsets of variables ~\citep{spirtes2000causation, eberhardt2017introduction}.

To address these limitations in literature, we propose a \textbf{\textit{causal partition}}. A causal partition is a graph partition of the hypothesis space, defined by a superstructure, into overlapping variable sets. A causal partition allows for merging locally estimated graphs \emph{without} an additional learning step. We can efficiently create a causal partition from any disjoint partition. This means that a causal partition can be an extension to any graph partitioning algorithm. 
We are interested in causal discovery for high-dimensional scientific problems; in particular, biological network inference. Biological networks are organized into hierarchical scale-free sub-modules ~\citep{albert2005scale, wuchty2006architecture, ravasz2009detecting}. The causal partition allows us to leverage the inherent, interpretable communities in these networks for scaling. 

Our contributions are as follows:
\begin{enumerate*}[label=\textbf{(\Alph*)}]
    \item We define a novel \textbf{\textit{causal partition}} which leverages a superstructure and extends any disjoint partition. 
    \item We prove, under certain assumptions, that learning with a causal partition is consistent without an additional learning procedure. 
    \item We show the efficacy of our algorithm on synthetic biologically-tuned networks up to 10,000 nodes.
\end{enumerate*}


\section{Related Work}
\label{sec:related_work}

 Causal discovery algorithms are categorized into two types: (i) Constraint-based algorithms use conditional independence tests to determine dependence between nodes \citep{spirtes2000causation, spirtes2000constructing}, and (ii) Score-based algorithms greedily optimize a score function over the space of potential graphs \citep{chickering2002optimal, hauser2012characterization}. 
To address the intractable search space for causal discovery, many \enquote{hybrid} methods first constrain the search space with a constraint-based method, and then greedily optimizing the subspace using a score-based method \citep{tsamardinos2006max, nandy2018high}. \citet{perrier2008finding} formalize this approach by defining the superstructure $G = (V,E)$ where for a true causal graph $G^*=(V,E^*)$, $E^* \subseteq E$. The superstructure can be found using a constraint-based method like the PC algorithm, which is sound and complete. The superstructure can also be informed by domain knowledge e.g., for gene regulatory networks genes that are functionally related likely constrain underlying regulatory relationships \citep{cera2019functionally}. Incorporating prior knowledge into causal discovery allows us to infer which hypotheses or known relationships are best supported by data.  


Another approach to scaling causal discovery algorithms is the divide-and-conquer approach. In this approach, random variables are partitioned into subsets. Causal discovery is run on each subset in parallel before a final merge to resolve a graph over the full variable set. \citet{huang2022partitioned} and \citet{gu2020learning} use hierarchical clustering of the data to obtain a disjoint partition of variables. 
Similarly, \citet{li2014graph} partition the node set using the Girvan-Newman community detection algorithm. Similar to our work, \citet{zeng2004block} use an overlapping partition, however, they do not provide any theoretical guarantees for learning. \citet{tan2022learning} use an ancestral partition to restrict candidate parents for exact causal discovery using dynamic programming. \citet{laborda2023ring} employ ring-based distributed parallelism. Our work differs from these because we use a superstructure $G$ to partition nodes into overlapping subsets using a novel causal graph partition with theoretical guarantees. The causal partition avoids any additional learning step to combine subsets. We show that a causal partition can be an extension to any disjoint partition, allowing us to learn effectively on graphs of varying topologies. Finally, \citet{zhang2024towards} also leverage a superstructure to partition the variable set and define a causal partition with similar properties to ours. However, the variable set can only be partitioned into two subsets using an optimal edge cut, meaning the scaling potential of this algorithm is limited. Our work has no constraints on the number of subsets, and as a result we can scale up to 10,000 variables.

\section{Background}
\label{sec:background}

\subsection{Causal Discovery}

Causal discovery considers a set of data sampled from the joint distribution of random variables $\textbf{X} \triangleq (X_1,\ldots,X_p)$ where $p$ is the number of random variables in the system. Each random variable $X_{i}\in\mathbb{R}^{n}$ is defined as a real-valued column vector where each value is an individual observation for random variable~$X_{i}$. We assume these relationships can be represented by a \textit{Directed Acyclic Graph}~(DAG). This DAG is a tuple $G^{*}=(V,E^{*})$ where $V$ is the node (or vertex) set made up of $p$ nodes corresponding to the random variables, and $E^{*} \subset V \times V$ is the set of directed edges between nodes. For each directed edge $(X_{i}, X_{j}) \in E^{*}$, we refer to the source node of the edge ($X_{i}$) as the \enquote{cause} and the target node of the edge ($X_{j}$) as the \enquote{effect}. The joint distribution of random variables is given by a probability density function that factorizes as:

\begin{equation}
    P(X_1...X_p) = \prod_{i}^{p} P\left(X_i|Pa^{G^*}(X_i)\right)
\end{equation}

\noindent 
Where $Pa^{G^*}(X_i)$ is the set of parents of node $i$ in $G^*$. Nodes that are \textit{d-separated} in $G^*$ imply a conditional independence in $P$. Let $X,Y \in V$ and $Z \subseteq{V} /\ \{X, Y\}$. If $Z$ $d$-separates $X$ from $Y$ in DAG $G^*$, then the random variables $X$ and $Y$ are conditionally independent given $Z$. We assume access to only observational data. In this setting, causal discovery algorithms only estimate a graph within the \textit{Markov Equivalence Class} (MEC) of $G^*$. 
The MEC of a causal graph $G$ consists of the set of DAGs that share the same conditional independence relationships and therefore \textit{d-separation} criteria. A \textit{Completed Partially Directed Acyclic Graph} (CPDAG) is the graph class that represents the MEC of a DAG. In this paper we denote the MEC of the true DAG $G^*$ as the CPDAG $H^*$. In particular $H^*$ has the same adjacencies
and unshielded colliders (triples with the following structure $i \rightarrow j \leftarrow k$ where $i$ and $k$ are not adjacent) as $G^*$ \citep{zhang2008causal}. As a helpful reference, we include relevant definitions in Table \ref{tab:notation}
\begin{table}[t]
    \centering
    \caption{Table of Relevant Notation}
    \label{tab:notation}
    
    {\small
    \begin{tabular}{p{0.2\linewidth}p{0.6\linewidth}}
    \toprule
        Symbol & Description\\
    \midrule
         $G^*$
            & Underlying true causal graph represented by a DAG. 
            \\
        $H^*$ &
            CPDAG representing MEC of $G^*$. \\
         $G$
            & Superstructure.
            \\
         $\textbf{X}$
            & The complete observed data matrix (of dimensionality $n \times p$). 
            \\
         $X_{i} \in \textbf{X}$ 
            & Observational data for the $i^\text{th}$ random variable; also used to denote nodes in graphical models.
            \\
        $(X_{i}, X_{j})$
            & Directed edge from random variables (nodes) $X_{i}$ to $X_{j}$.
            \\
        $\mathscr{A}$ 
            & Consistent causal learner that outputs an PAG on subsets $S$. 
            \\
         $\{S_{1}, \ldots, S_{N}\}$
            & Partition over node set $V$, where $S \subset V$. 
            \\
           $ \outerboundary(S)$ & The outer vertex boundary of a set of nodes $S$. \\
        $X_i \sim_{G'} X_j $ & Nodes $X_i$ and $X_j$ are adjacent in some graph $G'$.\\
           
    \bottomrule
    \end{tabular}
    }
\end{table}
.

\subsection{Graph Classes for Latent Variables}



While the causal graph can be represented by a DAG, 
we consider alternative graphical representations that consider latent (unobserved) variables. Namely, we consider two well-studied graph classes:
\begin{enumerate*}[label=\textit{(\roman*)}]
    \item \textit{Maximal Ancestral Graphs}~(MAGs) 
    and
    \item \textit{Partial Ancestral Graphs}~(PAG)
\end{enumerate*} 
\citep{richardson2003causal, zhang2008causal}.
 
\begin{definition}[mixed graph, MAG]
\label{def:graph_types_mag}
A mixed graph $G$ consists of a set of nodes $V$ and a set of directed edges $E \subset V \times V$ and a set of bi-directed edges $B \subset V \times V$. If $(X_i, X_j) \in E$ we say there is a directed edge between $X_i$ and $X_j$ and we write $X_i \rightarrow X_j$. If $\{X_i, X_j\} \in B$ we say there is a bi-directed edge and write $X_i \leftrightarrow X_j$. A mixed graph is called a maximal ancestral graph (MAG) if it contains no almost directed cycles and there is no inducing path between non-adjacent nodes. 
\end{definition}
An almost directed cycle is a cycle that contains both directed and bi-directed edges. An inducing path is defined as follows: 
\begin{definition}[Inducing path]
    Given $L \subset V$, an \textit{inducing path} relative to $L$ between vertices $u$ and $v$ is a path $\Pi = \{u, q_1,\dots,q_k, v\}$ such that every non-endpoint node in $\Pi \cap \{V\setminus L\}$ is a collider on $\Pi$ and an ancestor of at least one of $u$ or $v$.  
    \label{def:inducing_path}
\end{definition}
Some examples of inducing paths are illustrated in Figure~\ref{fig:inducing-paths}. The idea of \textit{d-separation} in DAGs can be extended to \textit{m-separation} in mixed graphs \citep{zhang2008causal}. The graph class that characterizes the Markov Equivalence Class of MAGs, governed by \textit{m-separation}, is the partial ancestral graph \citep{richardson2003causal}.

\begin{definition}[partial mixed graph, PAG]
\label{def:graph_types_Pag}
A partial mixed graph can contain four kinds of edges: $\rightarrow, \circundir,  - $, and $ \circdir $ and therefore has three kinds of end marks for edges: arrowhead ($>$), tail (-) and circle ($\circ$).\footnote{Additionally, we will use $*$ as a \enquote{wild card} end mark. For example $ u \stardir v$ means that the end mark at $u$ can be any of three outlined in the Defn. \ref{def:graph_types_Pag}.}Let $[M]$ be the Markov equivalence class of an arbitrary MAG $M$. The partial ancestral graph (PAG) for [M], PAG[M], is a partial mixed graph such that 
\begin{enumerate*}[label=(\textbf{\roman*})]
    \item PAG[M] has the same adjacencies as M (and any member of [M]) does;
    \item A mark of arrowhead is in PAG[M] if and only if it is shared by all MAGs in [M];
    and
    \item A mark of tail is in PAG[M] if and only if it is shared by all MAGs in [M].
\end{enumerate*}
\end{definition}

 We will prove, that under certain assumptions, we can reconstruct the CPDAG representing the MEC ($H^*$) of a the true DAG ($G^*$) from PAGs estimated on subsets of variables.

\subsection{Causal Discovery on Subsets of Variables}\label{ssec:cd-on-subsets-of-variables}
We now describe the problem setup for learning over subsets of variables. Column-wise subsets of $\textbf{X}$ are marked with a subscript: e.g., for a subset of nodes $S$, the corresponding subset of data is $\textbf{X}_{S}=\{X_i^n\}_{i \in S}$. The presence of latent variables outside the subset $S$ complicates our learning procedure. We must use MAGs rather than DAGs to represent graphs estimated on subsets of variables to ensure consistency of our algorithm. To this end we define a latent projection, as used by \citet{zhang2008causal}, of the true graph $G^*$ onto a subset of nodes $S$. An example is shown in \autoref{fig:inducing-paths}.

\begin{figure}
\vspace{-1cm}
    \centering
    \includegraphics[width=0.5\linewidth]{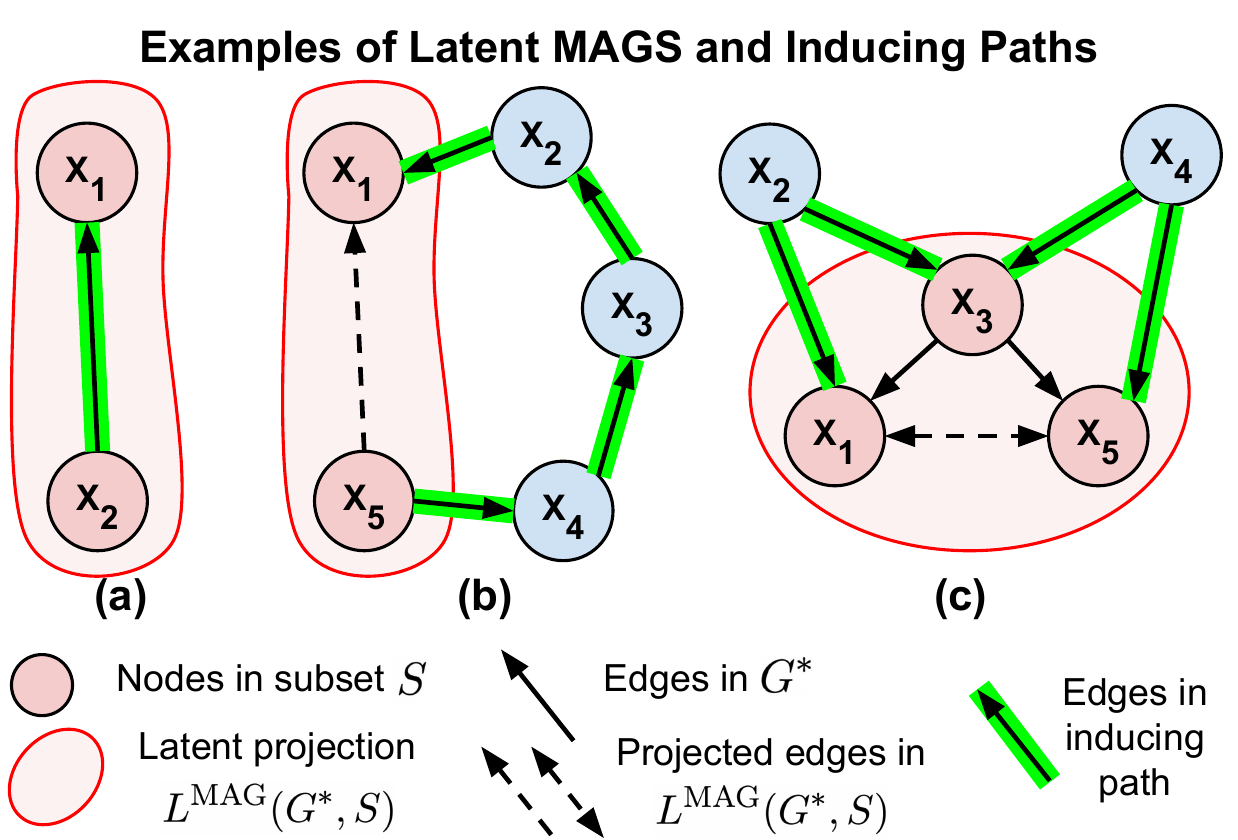}
    \caption{Examples of latent MAGS $L^{\text{MAG}}(G^*, S)$. Inducing paths $\Pi$ relative to $V\setminus S$ are highlighted in green. (a) For $x_1, x_2\in S$, any edge $(x_1,x_2)$ in $G^*$ is an inducing path relative to $V\setminus S$ between $x_1$ and $x_2$. (b) $\Pi$ is an inducing path relative to $V\setminus S$ between $x_1$ and $x_5$ because all non-endpoint nodes on the path are in $V\setminus S$. (c) $\Pi$ is an inducing path relative to $V\setminus S$ between $x_1$ and $x_5$ because every non-endpoint is either in $V\setminus S$ (nodes $x_2, x_4$), or is in $S$ \textit{and} is a collider on the path \textit{and} is an ancestor of at least one of $x_1$ or $x_5$ (node $x_3$).}
    \label{fig:inducing-paths}
\end{figure}

\begin{definition}[Latent MAG]
    Let $G$ be a DAG with variables $V$ and $S \subset V$, where V contains no selection variables.\footnote{There is no selection bias in our setting, since data is sampled from the full vertex set $V$ which retains causal sufficiency.} The latent MAG $L^{\text{MAG}}(G, S)$ is the MAG that contains all nodes in $S$ and satisfies:
    \begin{enumerate}
        \item $u,v \in S$ and $u \rightarrow v \in G \Rightarrow u \rightarrow v \in L^{\text{MAG}}(G,S)$ 
        \item (projected edge) $\in L^{\text{MAG}}(G,S)$ if there is an inducing path between $u$ and $v$ relative to $V \backslash S$ in $G^*$. The edge is directed $u \rightarrow v $ if $u$ is an ancestor to v in $G^*$. The edge is directed $v \rightarrow u$ if $v$ is an ancestor to u in $G^*$. Otherwise the edge is bi-directed $u \leftrightarrow v$.
    \end{enumerate}
    \label{def:latent_mag}
\end{definition}

Latent projections are well-studied objects in the causal discovery literature, see \citep{verma2022equivalence, faller2023self, richardson2023nested, zhang2008causal} for further definitions. A ground-truth DAG $G^*$ induces a \textit{latent MAG} $L^{\text{MAG}}(G^*,S)$ on a subset $S$. The Markov equivalence class of this MAG is denoted $[L^{\text{MAG}}(G^*,S)]$. 

Next, we assume that the structure learner employed on each subset is a complete and consistent PAG learner, even in the presence of confounder variables. Algorithms known to satisfy these assumptions include the seminal FCI algorithm \citep{zhang2008completeness}. 


\begin{assumption}
\label{assume:consistent_PAG_learner}
    We have 
    a consistent structure learning algorithm $\mathscr{A}$ that operates on data matrix $X_{S}$ for a subset of random variables $S \subseteq{V}$. 
    When the distribution $P$ satisfies faithfulness, then in the infinite data limit
    \[
        \mathscr{A}(X_S) = PAG[L^{\text{MAG}}(G^*,S)]
    \]
\end{assumption}
In particular, by definition of the latent MAG and latent PAG operators, Assumption~\ref{assume:consistent_PAG_learner} implies the output of $\mathscr{A}$ satisfies several properties.
\begin{lemma}\label{lem:properties-of-learner-output}
    Given $\mathscr{A}$ satisfying Assumption~\ref{assume:consistent_PAG_learner},
    \begin{enumerate}
        \item For any $x_i,x_j\in S$, the output $\mathscr{A}(X_S)$ has an edge between $x_i$ and $x_j$ if and only if there is an inducing path in $G^*$ relative to $V\setminus S$ between them. 
        \item For any triple $x_i, x_j, x_k \in S$ that form an unshielded collider in $G^*$ as $x_i\rightarrow x_j\leftarrow x_k$, the output $\mathscr{A}(X_S)$ will have an edge between $x_i$ and $x_j$ as well as $x_j$ and $x_k$, and both of these edges will have an arrowhead at $x_j$. 
        
        
        \item For any $u,v\in S$ such that $u\sim_{G^*}v$, if $u\sim_{\mathscr{A}(S)} v$ with an arrowhead at $v$ in $\mathscr{A}(X_S)$, then $u\rightarrow v$ in $G^*$.
    \end{enumerate}
\end{lemma}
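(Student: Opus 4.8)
The plan is to obtain all three items directly from Assumption~\ref{assume:consistent_PAG_learner}, which in the infinite-data limit identifies $\mathscr{A}(X_S)$ with the latent PAG $P[L^{\text{MAG}}(G^*,S)]$, by unwinding two chains of definitions: how $L^{\text{MAG}}(G^*,S)$ is read off from $G^*$ through Definition~\ref{def:latent_mag} and Definition~\ref{def:inducing_path}, and which marks of that MAG are kept by the PAG operator according to clauses (i)--(iii) of Definition~\ref{def:graph_types_Pag}. I would settle items~1 and~3 first, since each is essentially a matter of matching definitions, and treat item~2, where actual structure is needed, last.

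For item~1, clause~(i) of Definition~\ref{def:graph_types_Pag} says $P[L^{\text{MAG}}(G^*,S)]$ has exactly the adjacencies of $L^{\text{MAG}}(G^*,S)$, so it suffices to decide when $x_i$ and $x_j$ are adjacent in the latent MAG. Definition~\ref{def:latent_mag} inserts an $x_i$--$x_j$ edge in two situations: a direct edge between $x_i$ and $x_j$ in $G^*$ (clause~1), or a projected edge witnessed by an inducing path relative to $V\setminus S$ between $x_i$ and $x_j$ in $G^*$ (clause~2). The observation that closes this --- exactly Figure~\ref{fig:inducing-paths}(a) --- is that a direct edge between two nodes of $S$ is itself a degenerate inducing path relative to $V\setminus S$, since it has no non-endpoint vertices and the requirement in Definition~\ref{def:inducing_path} holds vacuously; hence clause~1 is subsumed by clause~2, and an $x_i$--$x_j$ edge appears in $\mathscr{A}(X_S)$ exactly when $G^*$ has such an inducing path, the converse direction being immediate because otherwise neither clause of Definition~\ref{def:latent_mag} fires. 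For item~3, $u\sim_{G^*}v$ means there is a direct edge between $u$ and $v$ in $G^*$, so clause~1 of Definition~\ref{def:latent_mag} forces the $u$--$v$ edge of $L^{\text{MAG}}(G^*,S)$ to point in the same direction as in $G^*$, and in particular it has an arrowhead at $v$ only if $u\to v$ in $G^*$; if $\mathscr{A}(X_S)=P[L^{\text{MAG}}(G^*,S)]$ carries an arrowhead at $v$ on that edge, then clause~(ii) of Definition~\ref{def:graph_types_Pag} makes that mark common to every MAG in the equivalence class, in particular to the representative $L^{\text{MAG}}(G^*,S)$ itself, and we conclude $u\to v$ in $G^*$.

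Item~2 is the crux. Because $x_i\to x_j$ and $x_k\to x_j$ in $G^*$, clause~1 of Definition~\ref{def:latent_mag} places directed edges $x_i\to x_j$ and $x_k\to x_j$ into $L^{\text{MAG}}(G^*,S)$, so the two edges exist and each already carries an arrowhead at $x_j$ \emph{in the MAG}; the work is to promote those arrowheads into the PAG, i.e.\ to check via clause~(ii) of Definition~\ref{def:graph_types_Pag} that they are shared by every Markov-equivalent MAG. I would do this by arguing that $x_i\stardir x_j\starleftdir x_k$ is an \emph{unshielded} collider in $L^{\text{MAG}}(G^*,S)$ and then invoking the standard fact, due to \citet{zhang2008causal}, that the arrowheads of an unshielded collider are invariant across a MAG Markov-equivalence class and hence survive into the PAG. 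Unshieldedness is the condition $x_i\not\sim x_k$ in $L^{\text{MAG}}(G^*,S)$, which by item~1 means that $G^*$ has no inducing path relative to $V\setminus S$ between $x_i$ and $x_k$. The collider path $x_i\to x_j\leftarrow x_k$ itself is easily excluded, since $x_j\in S$ is a collider on it but, being a common child of $x_i$ and $x_k$ in the DAG $G^*$, is an ancestor of neither, so Definition~\ref{def:inducing_path} fails; the main obstacle is ruling out \emph{every other} path between $x_i$ and $x_k$ as a candidate inducing path. This is precisely where one needs that $S$ does not cut $G^*$ in a way that confounds $x_i$ and $x_k$ through $V\setminus S$ --- a latent common cause of $x_i$ and $x_k$ outside $S$ would produce an $x_i\leftrightarrow x_k$ edge in the projection and render the triple shielded --- so I expect this step to draw on the structural properties of the causal-partition subsets (an ancestral-closure-type condition on $S$) rather than on Definitions~\ref{def:graph_types_mag}--\ref{def:latent_mag} alone.
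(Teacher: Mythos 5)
Your treatments of items~1 and~3 are correct and essentially coincide with the paper's own arguments: item~1 is exactly the combination of clause~(i) of Definition~\ref{def:graph_types_Pag} with the characterization of latent-MAG adjacency by inducing paths relative to $V\setminus S$ (your remark that a direct edge is a degenerate inducing path is the same observation the paper makes), and item~3 transfers the invariant arrowhead from $P[L^{\text{MAG}}(G^*,S)]$ back to the representative $L^{\text{MAG}}(G^*,S)$ and then uses acyclicity of $G^*$; the paper phrases this as a three-case contradiction but your direct version is equivalent.

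The gap is item~2, which you yourself identify as the crux and then do not close. You correctly reduce the claim to showing that the collider is \emph{unshielded} in $L^{\text{MAG}}(G^*,S)$, i.e.\ that no inducing path relative to $V\setminus S$ joins $x_i$ and $x_k$, but you then defer this to unspecified ``structural properties of the causal-partition subsets.'' The lemma is stated for an arbitrary $S$ under Assumption~\ref{assume:consistent_PAG_learner} alone, and no ancestral-closure-type hypothesis is available (Definition~\ref{def:new-causal-partition} does not supply one either, since its property~(iii) only places the triple in a common subset), so as written the proposal does not prove the statement. You should know, however, that your hesitation is pointing at a real weakness rather than a misunderstanding: the paper's proof dispatches this step by asserting that $x_i$ and $x_k$ are $d$-separated ``by $x_j$'' in $G^*$ (they are not -- conditioning on a collider opens the path; they are $d$-separated by some set excluding $x_j$) and that they are consequently $m$-separated within $S$ in the latent MAG, which is precisely what fails when $x_i$ and $x_k$ share a cause in $V\setminus S$. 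For example, with $\ell\to x_i$, $\ell\to x_k$, $x_i\to x_j\leftarrow x_k$ and $S=\{x_i,x_j,x_k\}$, Definition~\ref{def:latent_mag} projects an edge $x_i\leftrightarrow x_k$, the latent MAG is a complete triangle whose Markov equivalence class contains complete DAGs with a tail at $x_j$, and the PAG therefore has circle marks, not arrowheads, at $x_j$. So your proposal is incomplete at exactly the point where the paper's argument is also unsound; finishing item~2 requires either an added hypothesis (no inducing path between the collider's endpoints relative to $V\setminus S$) or a weakening of the conclusion, neither of which you (or the paper) provide.
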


The proofs for Lemma 1 are deferred to Appendix \ref{appendix:deferred-proofs}. These properties, at a high level, allow us to determine the alignment of the adjacencies and the unshielded colliders in locally estimated graphs $\mathscr{A}(X_S)$ to the underlying DAG $G^*$. These will be important for resolving the CPDAG $H^*$ using locally estimated graphs.



\subsection{Defining a Causal Partition}

\label{sec:causal_partition}
Here, we outline the properties of our novel causal partition, which admits a divide-and-conquer algorithm to estimate $H^*$ a CPDAG corresponding to $G^*$  by learning over subsets. Since learning on the entire variable set  with $\mathscr{A}(X_V)$ can be computationally intractable, we use an initial structure over the entire variable set to help partition $V$ into subsets. We first assume access to an initial superstructure $G$.
\begin{assumption}
\label{assume:superstructure}
We have access to superstructure $G = (V,E)$,  an undirected graph, that constrains the true graph $G^*$. This means all edges in $G^*$ are in $G$, but not all edges in $G$ are necessarily in $G^*$.\footnote{This assumption is not required to prove identifiability of $H^*$, rather it allows us to define the causal partition when the superstructure is not fully connected, and therefore, when we can exploit the communities in the superstructure to enable scaling.}
\end{assumption}

Now we consider some overlapping partition $\{S_1,\dots,S_N\}$ of $V$, and the output $\{\mathscr{A}(X_{S_i})\}_{i=1}^N$. Using Assumption~\ref{assume:consistent_PAG_learner}, we show that given a partition with a particular structure defined below, one can recover $H^*$ from $\{\mathscr{A}(X_{S_i})\}_{i=1}^N$.

\begin{definition}[Causal Partition]
\label{def:new-causal-partition}

    We say an overlapping partition $\{S_1,\dots,S_N\}$ is \textbf{causal} with respect to superstructure $G$ and ground-truth DAG $G^*$ if, given any learner $\mathscr{A}$ satisfying Assumption~\ref{assume:consistent_PAG_learner}, all of the following hold:
    \begin{itemize}
        \item[(i)] The partition is edge-covering with respect to the superstructure $G$.
        \item[(ii)] For any vertices $u,v$ such that $u\not\sim_{G^*}v$ and $u\sim_{G}v$, there exists some subset $S_i$ such that $u, v \in S_i$ and $\mathscr{A}(X_{S_i})$ does not contain an edge between $u$ and $v$. 
        \item[(iii)] For any unshielded collider $u\rightarrow v \leftarrow w$ in $G^*$, there exists some subset $S_i$ such that $\{u,v,w\}\subseteq S_i$.
    \end{itemize}
    
\end{definition}

In particular, property (ii) in Definition~\ref{def:new-causal-partition} is crucial to the divide-and-conquer strategy proposed in this work, as it allows the algorithm to identify and discard projected edges learned on a subset $S_i$ (as in Defn \ref{def:latent_mag}) by comparing the output $\mathscr{A}(X_{S_i})$ to results on other subsets. In Section~\ref{ssec:causal-expansion}, we show that given a superstructure satisfying Assumption~\ref{assume:superstructure}, a simple and computationally tractable procedure yields a causal partition satisfying all above properties.

\section{Guarantees in the Infinite Data Limit} 
\label{sec:proof}

Now we prove that given any causal partition $\{S_1,\dots,S_N\}$ with respect to DAG $G^*$ and superstructure $G$, one can recover $H^*$ a CPDAG corresponding to $G^*$. Our main theorem states that Algorithm~\ref{alg:new-screening}  recovers $H^*$ from local output $\{\mathscr{A}(X_{S_i})\}_{i=1}^N$.

\begin{theorem}
\label{thm:new-screen-works}
    Given superstructure $G$ satisfying Assumption~\ref{assume:superstructure}, a learner $\mathscr{A}$ satisfying Assumption~\ref{assume:consistent_PAG_learner}, and $\{S_1,\dots, S_N\}$ a causal partition with respect to $G$ and $G^*$, let $H^*$ denote the output of Algorithm~\ref{alg:new-screening}
    \[
        H^* = \screenprojections(G, \{\mathscr{A}(X_{S_i})\}_{i=1}^N).
    \]
    Then $H^*$ satisfies the following properties:
    \begin{enumerate*}[label=\textbf{(\roman*)}]
        \item $\forall u,v \in V$, $u\sim_{H^*}v$ if and only if $u\sim_{G^*}v$;
        \item For any unshielded collider $u\rightarrow v \leftarrow w$ in $H^*$, it holds that $u\rightarrow v\leftarrow w$ in $G^*$;
        and
        \item For any unshielded collider $u\rightarrow v\leftarrow w$ in $G^*$,  $u\sim_{H^*}v$  and  $v\sim_{H^*}w$ and both edges have an arrowhead at $v$ in $H^*$. 
    \end{enumerate*}
\end{theorem}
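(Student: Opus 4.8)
The plan is to verify each of the three properties of $H^*$ by tracking how edges and orientations survive the three phases of Algorithm~\ref{alg:new-screening}: (1) intersection with the superstructure, (2) the ``keep an edge iff it appears on every subset containing the pair'' rule, and (3) the unshielded-collider orientation pass. I will lean heavily on Lemma~\ref{lem:properties-of-learner-output} (to relate local outputs $\mathscr{A}(X_{S_i})$ to inducing paths and unshielded colliders in $G^*$) and on the three defining properties of a causal partition (Definition~\ref{def:new-causal-partition}).

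\emph{Property (i), adjacencies.} For the ``only if'' direction, suppose $u\sim_{H^*}v$. Then the edge survived phase~2, so $u\sim_G v$ (phase~1) and $u\sim_{\mathscr{A}(X_{S_i})}v$ for \emph{every} subset $S_i\supseteq\{u,v\}$. If we had $u\not\sim_{G^*}v$, then since $u\sim_G v$, causal-partition property~(ii) gives a subset $S_i$ containing $u,v$ with no edge between them in $\mathscr{A}(X_{S_i})$ — contradicting phase~2's requirement. Hence $u\sim_{G^*}v$; note phase~3 only re-orients existing edges, so it cannot create new adjacencies. For the ``if'' direction, suppose $u\rightarrow v$ (or $v\rightarrow u$) in $G^*$. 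Then $u\sim_G v$ by Assumption~\ref{assume:superstructure}, so the edge is not discarded in phase~1. By edge-covering (property~(i) of the partition) there is at least one subset containing both; and for \emph{every} subset $S_i\supseteq\{u,v\}$, the single edge $u\to v$ of $G^*$ is itself an inducing path relative to $V\setminus S_i$ (cf.\ Figure~\ref{fig:inducing-paths}(a)), so by Lemma~\ref{lem:properties-of-learner-output}(1) we get $u\sim_{\mathscr{A}(X_{S_i})}v$. Thus phase~2 adds $u - v$ to $E^*$, and phase~3 never deletes an edge outright (it only replaces $u-v$ by an oriented version). Hence $u\sim_{H^*}v$.

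\emph{Property (iii), colliders in $G^*$ are recovered.} Let $u\rightarrow v\leftarrow w$ be an unshielded collider in $G^*$. By property~(iii) of the causal partition there is a subset $S_i\supseteq\{u,v,w\}$. By property~(i) just proved, both $u\sim_{H^*}v$ and $v\sim_{H^*}w$, so after phase~2 these edges are present in $E^*$ (either still undirected, or possibly already oriented by an earlier iteration of phase~3 — but if so, by property~(ii), already oriented consistently with a $G^*$-collider, hence with arrowheads at $v$). Now within $H_i=\mathscr{A}(X_{S_i})$, Lemma~\ref{lem:properties-of-learner-output}(2) guarantees edges $u\stardir v$ and $v\starleftdir w$ with arrowheads at $v$; moreover $u\not\sim_{H_i}w$ since $u,w$ are non-adjacent in $G^*$ and (by Lemma~\ref{lem:properties-of-learner-output}(1)) an edge in $H_i$ would require an inducing path relative to $V\setminus S_i$ between $u$ and $w$ — one must check this does not exist, which follows because any such path, combined with the collider structure, would make $u$ and $w$ adjacent in the latent MAG and hence (pulling the inducing path back) would force $u\sim_{G^*}w$ after accounting for the projection, contradicting unshieldedness; I will need to argue this carefully, possibly invoking that the collider is unshielded in $G^*$ and that $v\in S_i$. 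So $u\stardir v\starleftdir w$ is an unshielded triple in $H_i$, the $\texttt{If}$ in phase~3 fires (assuming both edges sit in $E^*$ as undirected, or are already correctly oriented), and the output has arrowheads at $v$ on both edges.

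\emph{Property (ii), colliders in $H^*$ are genuine.} Suppose $u\rightarrow v\leftarrow w$ is an unshielded collider in $H^*$. The arrowheads at $v$ were introduced in phase~3, so there is a subset $S_j$ with an unshielded triple $u\stardir v\starleftdir w$ in $H_j=\mathscr{A}(X_{S_j})$ (with arrowheads at $v$), and $u,v,w$ are pairwise as required. By property~(i), $u\sim_{G^*}v$ and $v\sim_{G^*}w$; and by property~(i) applied to the pair $u,w$, since $u\not\sim_{H^*}w$ (unshielded in $H^*$) we get $u\not\sim_{G^*}w$. Now apply Lemma~\ref{lem:properties-of-learner-output}(3): the arrowhead at $v$ on the $u$-$v$ edge in $H_j$ forces $u\rightarrow v$ in $G^*$, and similarly $w\rightarrow v$ in $G^*$. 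Combined with $u\not\sim_{G^*}w$, this is exactly an unshielded collider $u\rightarrow v\leftarrow w$ in $G^*$. The one subtlety is that phase~3 iterates over all subsets and might try to orient the same pair $u-v$ with conflicting arrowheads; I would argue no conflict arises because every orientation it performs comes from a local collider, each of which — by the argument above — reflects a true $G^*$-collider, and all $G^*$-colliders at $v$ involving the edge $u-v$ agree on the arrowhead at $v$. This ensures phase~3 is well-defined regardless of iteration order.

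\emph{Main obstacle.} The delicate point is the non-adjacency bookkeeping in Property~(iii) (and its mirror in (ii)): showing that an unshielded collider $u\rightarrow v\leftarrow w$ of $G^*$ remains \emph{unshielded} in the relevant local PAG $H_i$, i.e.\ that no inducing path relative to $V\setminus S_i$ connects $u$ and $w$. This requires care because $S_i$ may contain other nodes that could serve as colliders on such a path; the argument must use that $G^*$ itself has no $u$-$w$ edge together with the structure of inducing paths, and possibly that the causal partition/superstructure constrains which nodes lie in $S_i$. Relatedly, confirming that phase~3's orientation step never produces a contradictory double-orientation — equivalently, that Algorithm~\ref{alg:new-screening} is order-independent — is the other place where one must do genuine work rather than routine unwinding of definitions.
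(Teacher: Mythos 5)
Your arguments for properties (i) and (ii) coincide with the paper's: property (i) is exactly the paper's Lemma~\ref{lem:main-thm-property-i} (edge-covering plus Lemma~\ref{lem:properties-of-learner-output}(1) for the ``if'' direction, causal-partition property (ii) for the ``only if'' direction), and property (ii) is the paper's Lemma~\ref{lem:no-contradicting-directed-edges} combined with (i), i.e.\ the same use of Lemma~\ref{lem:properties-of-learner-output}(3); your worry about conflicting orientations in phase~3 is resolved the same way the paper implicitly resolves it (every arrowhead introduced reflects a true directed edge of $G^*$, so no two orientations can clash). The genuine gap is the one you yourself flag in property (iii): Algorithm~\ref{alg:new-screening} only orients a triple that is \emph{unshielded in the local PAG} $H_i$, and your sketched way of discharging this does not work. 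Adjacency of $u$ and $w$ in $\mathscr{A}(X_{S_i})$ requires only an inducing path in $G^*$ relative to $V\setminus S_i$; such a path does not ``pull back'' to an edge of $G^*$ --- projected edges are precisely the cases where it does not. Concretely, take $G^*$ with $u\rightarrow v\leftarrow w$, $u\rightarrow a$, $b\rightarrow a$, $b\rightarrow w$, $a\rightarrow c\rightarrow w$, and a subset $S_i=\{u,v,w,a\}$ (so $b,c\notin S_i$): the path $(u,a,b,w)$ is inducing relative to $V\setminus S_i$ ($a$ is a collider on it and an ancestor of $w$), so $u\sim_{\mathscr{A}(X_{S_i})}w$ and the triple $u,v,w$ is shielded in $H_i$. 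Nothing in Definition~\ref{def:new-causal-partition} forbids this: its property (ii) only constrains pairs that are adjacent in the superstructure $G$, whereas here $u\not\sim_G w$ when $G$ is a perfect superstructure; one can complete such an example to a partition satisfying (i)--(iii) of Definition~\ref{def:new-causal-partition} (e.g.\ with a second subset $\{u,a,b,c,w\}$ covering the remaining edges and colliders) in which no subset containing $\{u,v,w\}$ has the triple unshielded. So the step cannot be closed by the route you propose, nor by routine unwinding of the stated partition properties.

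For calibration against the paper: the paper's own proof of this property (Lemma~\ref{lem:main-thm-property-iii}) is no more detailed at this point --- it invokes causal-partition property (iii) and Lemma~\ref{lem:properties-of-learner-output}(2) and then asserts the orientation condition of the algorithm is satisfied, without verifying that the triple is unshielded in $H_i$. So you have not overlooked an argument that the paper supplies; you have isolated a subtlety the paper passes over silently. Closing it requires something beyond both write-ups, for instance strengthening the definition of a causal partition (so that some subset containing the collider admits no inducing $u$--$w$ path), or modifying the orientation pass to test unshieldedness against $E^*$ rather than against $H_i$ (which is natural, since property (i) already guarantees $u\not\sim_{H^*}w$ whenever $u\not\sim_{G^*}w$). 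As submitted, your proof of properties (i) and (ii) is complete and matches the paper, while property (iii) rests on an unproved claim whose proposed justification is incorrect.
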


Property $(i)$ in Theorem~\ref{thm:new-screen-works} states that $H^*$ contains the same adjacencies as $G^*$. Properties $(ii)$ and $(iii)$ combine to imply that an unshielded collider $u\rightarrow v\leftarrow w$ appears oriented in $H^*$ if and only if that unshielded collider exists in $G^*$. 
These combined properties
ensure that $H^*$ is the CPDAG that represents the MEC of $G^*$.

The proof of Theorem~\ref{thm:new-screen-works}, included in Appendix~\ref{appendix:deferred-proofs}, relies on the fact that by definition of a causal partition, for any $u,v$ not adjacent in $G^*$, there must be a subset $S_i$ such that $u,v\in S_i$ and the local output $\mathscr{A}(S_i)$ does not contain an edge between $u$ and $v$. This allows us to \enquote{screen} projected edges from true edges as edges that are not consistent across all locally estimated graphs. 

We note that \screenprojections\ is computationally lightweight. The dominant cost is $O(N\cdot m'\cdot d)$, for $N$ the number of partitions, $m'$ the total number of learned edges, and $d$ the maximum degree in the learned graph. Of note, $m'\leq p^2$ for $p$ the number of random variables, and in real-world applications learned graphs tend to be sparse so typical instances have $m' \ll p^2$~\citep{barabasi2013network}.

We highlight that because we only assume access to observational data, we can only recover cause-effect relationships contained in the Markov equivalence class of $G^*$, and therefore our guarantees relate to learning $H^*$ a CPDAG representing the MEC of $G^*$. In Section~\ref{sec:conclusion}, we discuss potential extensions of our framework for settings where both interventional and observational data are available, which might allow one to further reduce the size of the learned equivalence class.

\begin{wrapfigure}{L}{0.45\textwidth}
\begin{minipage}{0.45\textwidth}

\begin{algorithm}[H]
\footnotesize
\KwIn{a superstructure $G$, a set of PAGS $\{H_i = (S_i, E_i)\}_{i=1}^N$}
\KwResult{$H^* = (V, E^*)$ a CPDAG}
    Initialize $V = \cup_{i=1}^N S_i$; $E_{\text{candidates}}\gets \cup_{i=1}^N E_i$; $E^*\gets \emptyset$\;
    
    \tcp{Discard edges not in superstructure.}
    $E_{\text{candidates}} \gets E_{\text{candidates}} \cap \{u\starundir v \mid u\sim_{G} v\}$\;
    \ForEach{$u,v$ such that $\{u\starundir  v\}\in E_{\text{candidates}}$}{
        \If{$\forall i$ s.t. $S_i \supseteq \{u,v\}$, $u\sim_{\mathcal{A}(S_i)}v$\label{line:favor_no_edge}}{
            \tcp{If an edge between $u$ and $v$ appears in the output on all subsets, add undirected edge to output graph.}
            $E^* \gets E^{*} \cup \{ u - v\}$\;
        }
    }
    \tcp{Orient unshielded colliders}
    \ForEach{$i \in [N]$}{
        \ForEach{Unshielded $u\stardir v \starleftdir w$ in $H_i$}{
            \If{$u - v$ and $v - w$ in $E^*$}{
                $\texttt{discard}\gets \{u - v, v - w\}$\;
                $\texttt{orient}\gets \{u\rightarrow v, v\leftarrow w\}$\;
                $E^* \gets \{E^*\setminus \texttt{discard}\} \cup \texttt{orient}$\;
            } 
        }
    }
   \Return $H^* = (V,E^*)$ 
\caption{\screenprojections$(G, \{H_i\}_{i=1}^N)$}
\label{alg:new-screening}
\end{algorithm}

\end{minipage}
\end{wrapfigure}

\section{A Practical Algorithm for Causal Discovery with a Causal Partition}
\label{sec:full-cd-algo}

Here, we describe a practical procedure for causal discovery motivated by the idealized results studied in Section~\ref{sec:proof}. We discuss how partitions satisfying Defn.~\ref{def:new-causal-partition} can be efficiently constructed, 
and detail a full end-to-end algorithm for causal discovery.

\subsection{Efficient Creation of a Causal Partition}\label{ssec:causal-expansion}

The causal partition structure, described in Defn. \ref{def:new-causal-partition}, is crucial to the guarantees of Theorem~\ref{thm:new-screen-works} in the infinite data limit. While the first property of a causal partition---edge coverage with respect to superstructure $G$---is easy to ensure, it is not obvious how to satisfy properties (ii) and (iii) without knowledge of the ground truth $G^*$. Here we present a simple and intuitive method for constructing causal partitions. This construction is efficient and adapts to arbitrary superstructure topologies.

Given a graph $G = (V,E)$ and $S\subseteq V$, let $\outerboundary(S)$ denote the outer vertex boundary of set $S$ in $G$:
\[
    \outerboundary(S) \equiv \{ v\in V(G)\setminus S: \exists u\in S \text{ such that }v \sim_{G} u \}
\]
where $v\sim_{G} u$ if any of $(u,v), (v,u)$ or $\{u,v\}\in E$.

Given any initial vertex-covering partition of the superstructure $G$, we consider the overlapping partition formed by expanding subsets via the addition of vertices from the outer boundary.

\begin{definition}
\label{def:causal-expansion}
    Let $\{S_1,\dots,S_N\}$ be a vertex-covering partition of graph $G$. The \textit{causal expansion} of $\{S_1,\dots,S_N\}$ with respect to $G$ is defined as $\{S'_1,\dots,S'_N\}$ with subsets 
    $
        S'_i = S_i \cup \outerboundary(S_i).
    $
\end{definition}

As the name suggests, we show that a causal expansion satisfies the properties of a causal partition. The proof is deferred to Appendix~\ref{appendix:deferred-proofs}.

\begin{lemma}\label{lem:causal-expansion}
    Given $G$ a superstructure satisfying Assumption~\ref{assume:superstructure}, $\{S_1,\dots,S_N\}$ a vertex-covering partition of $G$. Then the causal expansion $\{S'_1,\dots,S'_N\}$ is a  causal partition with respect to $G$ and $G^*$.
\end{lemma}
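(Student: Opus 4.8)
\textbf{Proof plan for Lemma~\ref{lem:causal-expansion}.} The plan is to verify the three defining properties of a causal partition (Definition~\ref{def:new-causal-partition}) in turn for the causal expansion $\{S'_1,\dots,S'_N\}$, using the properties of the learner output recorded in Lemma~\ref{lem:properties-of-learner-output}. Property (i), edge-coverage with respect to $G$, should follow almost immediately: since $\{S_1,\dots,S_N\}$ is vertex-covering, every edge $\{u,v\}\in E$ has an endpoint $u\in S_i$ for some $i$, and then $v\in S_i\cup\outerboundary(S_i)=S'_i$, so both endpoints lie in $S'_i$. (A small subtlety: if $u,v$ land in different $S_i$'s one must check the boundary expansion still captures the edge; but picking the part containing $u$ and noting $v$ is a $G$-neighbor of $u$ handles it.)

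For property (ii), fix $u,v$ with $u\not\sim_{G^*}v$ but $u\sim_{G}v$. Since the initial partition is vertex-covering, there is some $S_i$ containing $u$; because $u\sim_{G}v$ we get $v\in\outerboundary(S_i)\subseteq S'_i$ (or $v\in S_i$ already), so $\{u,v\}\subseteq S'_i$. By Lemma~\ref{lem:properties-of-learner-output}(1), $\mathscr{A}(X_{S'_i})$ has an edge between $u$ and $v$ if and only if there is an inducing path relative to $V\setminus S'_i$ between them in $G^*$. So the crux is to show: \emph{there is some such $S'_i$ with no inducing path relative to $V\setminus S'_i$ between $u$ and $v$.} The natural choice is again the part $S_i$ with $u\in S_i$: I expect the argument to be that $u$ is an interior vertex of $S'_i$ in the sense that \emph{all} $G$-neighbors of $u$ lie in $S'_i$ (since $S'_i\supseteq\outerboundary(S_i)\supseteq\outerboundary(\{u\})$). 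Any inducing path between $u$ and $v$ in $G^*$ starts with an edge of $G^*\subseteq G$ incident to $u$, hence its first internal vertex is a $G$-neighbor of $u$ and thus lies in $S'_i$, so it cannot be a "latent" vertex; one then has to argue that it being a collider forces $u$ and $v$ adjacent or derives a contradiction with $u\not\sim_{G^*}v$. I anticipate this is the main obstacle — carefully showing that an inducing path between $u$ and $v$ relative to $V\setminus S'_i$ cannot exist when every neighbor of $u$ is inside $S'_i$, using the collider/ancestor structure in Definition~\ref{def:inducing_path} and acyclicity of $G^*$.

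For property (iii), let $u\rightarrow v\leftarrow w$ be an unshielded collider in $G^*$. Then $u\sim_{G^*}v$ and $w\sim_{G^*}v$, so both are edges of $G$. Pick $S_i$ with $v\in S_i$; then $u,w\in\outerboundary(S_i)\cup S_i\subseteq S'_i$, so $\{u,v,w\}\subseteq S'_i$, as required. (Alternatively, start from a part containing $u$ and note $v$ is a neighbor, hence in $S'_i$, and $w$ is a neighbor of $v$, hence in $S'_i$ if the expansion were iterated — but the one-step expansion already suffices by choosing the part around $v$.)

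Finally I would assemble these three verifications and invoke Definition~\ref{def:new-causal-partition} to conclude that $\{S'_1,\dots,S'_N\}$ is a causal partition with respect to $G$ and $G^*$. The only place real work is needed is property (ii); properties (i) and (iii) are essentially bookkeeping about the outer-boundary expansion, while (ii) requires the structural claim about inducing paths vanishing once a vertex has all of its $G$-neighbors inside the subset, which is where I would spend the bulk of the proof.
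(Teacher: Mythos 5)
Your handling of properties (i) and (iii) matches the paper's (Lemmas~\ref{lem:causal-partition-property-1} and~\ref{lem:causal-partition-property-4}): pick the core subset containing $u$ (resp.\ the collider vertex $v$) and note that the boundary expansion absorbs all $G$-neighbors. The gap is in property (ii), exactly where you expected the work to be, but the single-subset strategy you sketch does not go through. Taking $S_i\ni u$ does guarantee every $G$-neighbor of $u$ lies in $S'_i$, so the first internal vertex $q_1$ of any candidate inducing path relative to $V\setminus S'_i$ lies in $S'_i$, must be a collider (so $u\rightarrow q_1$), and by acyclicity must be an ancestor of $v$ rather than of $u$. This yields no contradiction: if $v$ sits in the boundary ring $S'_i\setminus S_i$ and has a $G^*$-neighbor $z\notin S'_i$, the path $u\rightarrow q_1\leftarrow z\rightarrow v$ with $q_1\in S'_i$ an ancestor of $v$ is a perfectly valid inducing path relative to $V\setminus S'_i$ even though $u\not\sim_{G^*}v$; the learner on $S'_i$ will then report an edge between $u$ and $v$, and no argument confined to this one subset can rule that out.

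The paper's proof (Lemma~\ref{lem:causal-partition-property-3}) resolves this with a case analysis on $\text{dist}_{G^*}(u,\outerboundary S'_i)$ and $\text{dist}_{G^*}(v,\outerboundary S'_i)$. When both endpoints are at distance at least $2$ from the boundary, your intended argument is essentially the paper's Cases 1--2, via the helper Lemma~\ref{lem:no-internal-inducing-paths}: two adjacent non-endpoints inside the subset cannot both be colliders in a DAG, and a prefix $\{u,q_1\}$ together with a suffix $\{q_k,v\}$ inside the subset forces a cycle through $u,q_1,v,q_k$. But in the boundary case the proof must invoke a \emph{second} subset: since $v\in S'_i\setminus S_i$, vertex-coverage gives some $S_j\ni v$, and $u\sim_G v$ places $u\in S'_j$; one then shows that inducing paths in both $S'_i$ and $S'_j$ would produce vertices $q_i,q_j$ with $u\in\text{anc}_{G^*}(q_i)$, $q_i\in\text{anc}_{G^*}(v)$, $v\in\text{anc}_{G^*}(q_j)$, $q_j\in\text{anc}_{G^*}(u)$, i.e.\ a cycle in $G^*$, so at least one of the two subsets certifies the non-edge --- which is all Definition~\ref{def:new-causal-partition}(ii) requires, since it asks only for \emph{some} subset with no learned edge. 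A further reduction handles the case where both endpoints are at distance $1$ in $S'_i$. Your plan, as written, commits to the subset anchored at $u$ and cannot be completed without this cross-subset ancestor/cycle argument.
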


This simple construction, illustrated in \autoref{fig:expansive_causal}, offers several advantages. Firstly, this method can be run on any vertex-covering initial partition $\{S_1,\dots,S_N\}$. Graph partitioning algorithms form an extensive field \citep{girvan2002community, clauset2004finding, schaeffer2007graph, malliaros2013clustering, harenberg2014community}, and depending on the topology of $G$ different partitioning may be more appropriate to a specific superstructure. The causal expansion allows a user to first partition the superstructure $G$ using whatever method is most appropriate to the application, and then easily derive a corresponding causal partition.

The causal expansion is computationally efficient, both to construct and in its incorporation into the full causal discovery procedure, described in Algorithm \ref{alg:full_cd}. Given an initial partition $\{S_1,\dots,S_N\}$, constructing its causal expansion takes time linear in the size of the superstructure $G$. In \autoref{appendix:subset-size}, we discuss how connectivity properties of the initial partition $\{S_1,\dots,S_N\}$ dictate the size of the largest subset.

\begin{multicols}{2}
\begin{Figure}
    \centering
    \includegraphics[width=0.8125\linewidth]{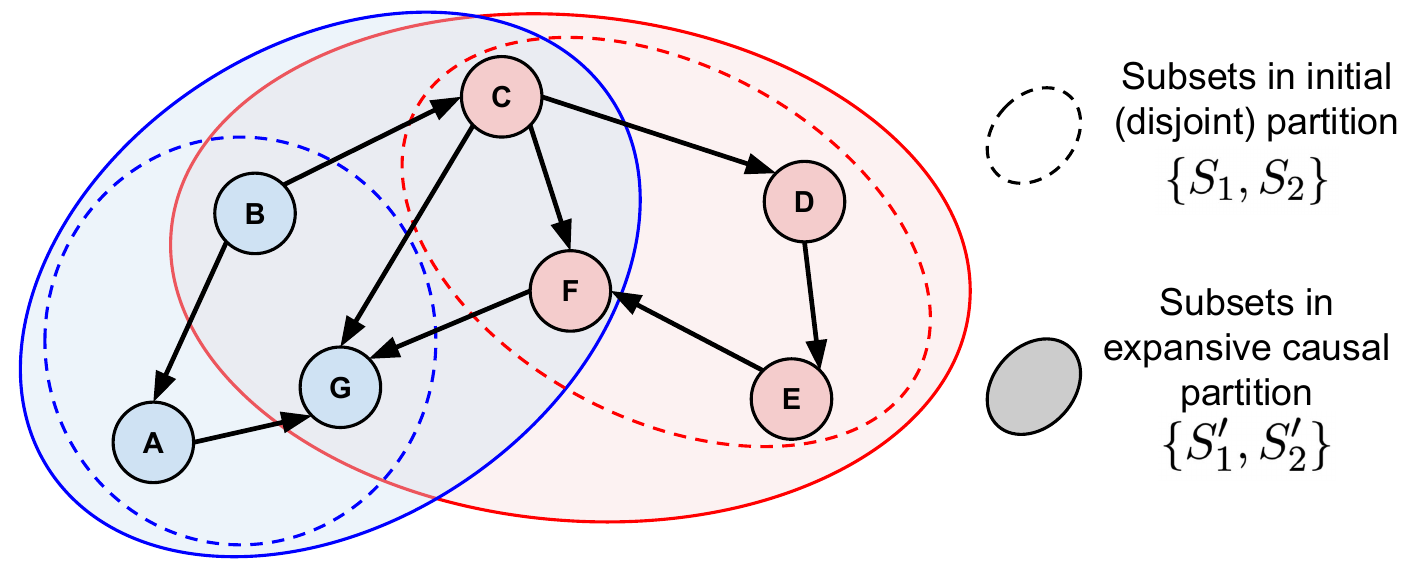}
    \captionof{figure}{Expansive causal partition $\{S'_1,S'_2\}$ made \\from initial disjoint partition $\{S_1,S_2\}$.}
    \label{fig:expansive_causal}
\end{Figure}
\columnbreak
\setcounter{AlgoLine}{0}  

\begin{algorithm}[H]
\footnotesize
\KwIn{a set of variables $V$, a matrix of observations $X$, superstructure $G$}
\KwResult{$G_{\text{out}} = (V,E)$ a CPDAG}

$\{D_1,\ldots,D_N\} \gets \texttt{disjoint\_partition}(G)$\;

\tcc{construct causal expansion}

$S_{i} \gets D_{i} \cup \outerboundary(D_i)(\forall 1 \leq i \leq N)$\;

$\{G_{S_i} = \mathscr{A}(X_{S_i})\}_{i=1}^N$\;

\Return $G_{\text{out}} \gets \screenprojections(G,\{G_{S_i}\})$\;

\caption{\texttt{causal\_discovery}$(V, X, G)$}
\label{alg:full_cd}
\end{algorithm}



\end{multicols}

Now, we describe our divide-and-conquer causal discovery algorithm with an expansive causal partition as described in Section~\ref{ssec:causal-expansion}. Algorithm \ref{alg:full_cd} requires a set of variables $V$, a data matrix $X$ and a superstructure $G$. In Section \ref{sec:exp4} we also study the case where $G$ is derived from data using the PC algorithm. Any causal learner can be plugged into $\mathscr{A}$, but for consistent learning we require that the assumptions for $\mathscr{A}$ allow for causal insufficiency (confounders may be present) and causal faithfulness. Any graph partitioning algorithm can be plugged into \texttt{disjoint\_partition}. A complexity analysis of divide-and-conquer with an expansive causal partition is shown in \autoref{appendix:complexity}. In the next sections we show the 
use 
of this practical algorithm on random and biologically-tuned networks with synthetic data.

\section{Empirical Results on Random Networks}
\label{sec:experiments_random}
In this section we describe experiments for evaluating Algorithm \ref{alg:full_cd} on synthetic random networks with finite data. 
For causal discovery on subsets (i.e., $\mathscr{A}$)  we evaluate with four different algorithms: (1) Peters-Clark (PC) \citep{spirtes2000causation}, (2) Greedy Equivalence Search (GES) \citep{hauser2012characterization}, (3) Really Fast Causal Inference (RFCI) \citep{colombo2012learning}, and (4) DAGMA \citep{bello2022dagma}. Note that only RFCI is a PAG learner that satisfies Assumption \ref{assume:consistent_PAG_learner}. The other algorithms are DAG learners that assume causal sufficiency; still we include them in this evaluation because (a) they are popular causal discovery benchmarks, and (b) even with the violation to causal sufficiency, we observe good performance with the causal partition. For details on how the DAG subgraphs are merged see \autoref{appendix:exp-setup}. For \texttt{disjoint\_partition} in Algorithm \ref{alg:full_cd} we use greedy modularity based community detection~\citep{clauset2004finding}. We benchmark our algorithm with another divide-and-conquer method \pef\ \citep{gu2020learning}.

For evaluation, we use the following metrics: 
(1) \textit{True Positive Rate}~(TPR) of correct edges in the estimated graph, $\hat{G}$, compared to the edges in $G^*$; 
and
(2) \textit{Structural Hamming Distance}~(SHD), which is the number of incorrect edges. An incorrect edge is any edge in $G^*$ that is missing in $\hat{G}$ or any edge in $\hat{G}$ that is not in $G^*$. Additionally for larger networks we include (3) \textit{False Positive Rate}~(FPR); and (4) \textit{Time}.

\textbf{Default parameters:} 
We use the following parameters by default unless stated otherwise. 
The graph topology is constructed by generating two scale-free networks using the Barab\'{a}si-Albert generative model~\citep{barabasi2003scale}; both graphs have $p=50$ nodes each, with one graph being constructed with a $m=1$ edge per node and the second graph being constructed with $m=2$ edges per node (edge connections are established via preferential attachment as per the generative model). We use $n=100,000$ samples from the joint, multivariate Gaussian distribution (details on the DAG and data generating process are in Appendix \ref{appendix:dag_gen}). The fraction of additional edges in a perfect superstructure $G$ is 10\% of the edges in $G^*$ (all edges in $G$ are undirected). 
For causal discovery on subsets we set $\mathscr{A}$ to PC, GES, RFCI or DAGMA. Finally, for \texttt{disjoint\_partition} in Algorithm \ref{alg:full_cd} we use greedy modularity \citep{clauset2004finding} from the \texttt{networkx} Python library. The parameter settings for $\mathscr{A}$ and each partitioning algorithm are detailed in Appendix \ref{appendix:part_settings} and \ref{appendix:cd_settings}.


\begin{figure*}[h!]
    \centering
    \includegraphics[width=\linewidth]{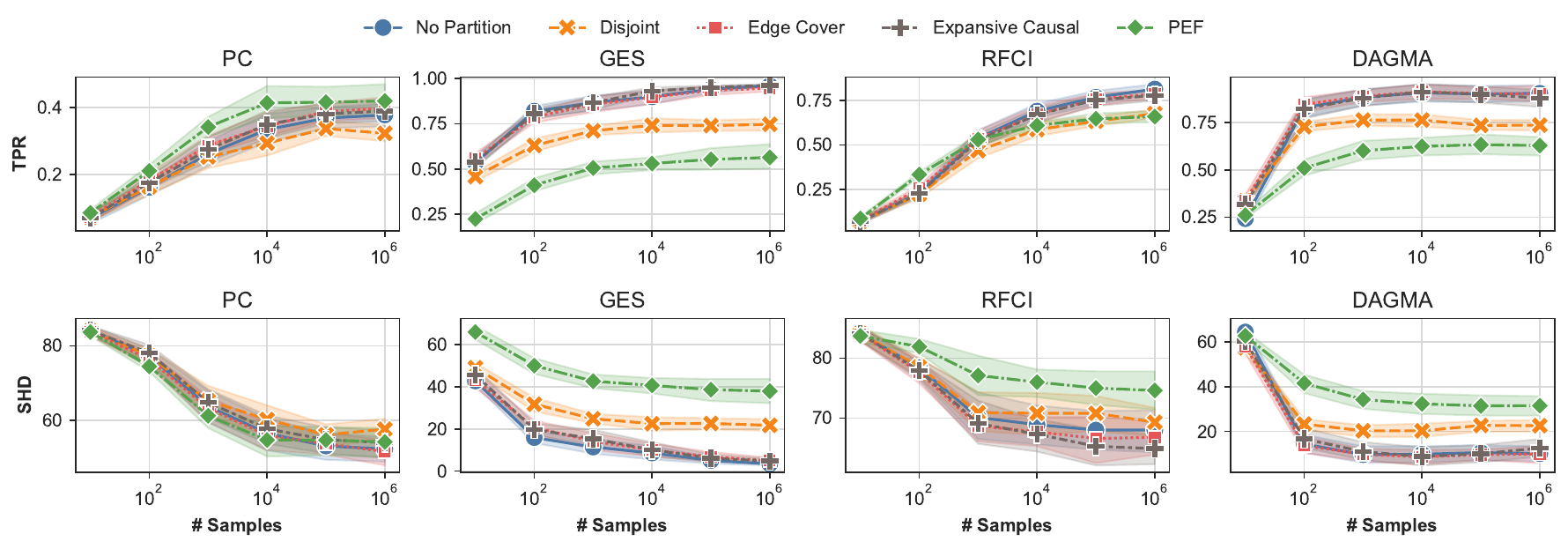}
    \caption{Experiment increasing the number of samples $\bm{n}$. Error bars are 95\% confidence intervals.}
    \label{fig:exp_1}
\end{figure*}

\subsection{Number of samples}
\label{sec:exp1}

In this experiment, we test the consistency of Algorithm \ref{alg:full_cd} with increasing samples $n$. We use a perfect superstructure and add a fraction 10\% extra extraneous edges to $G$ that are not in $G^*$. Results are shown in \autoref{fig:exp_1}. As the sample size increases, we see the convergence of \nopartition\ with the MEC of $G^*$. This empirically supports our theoretical result that Algorithm \ref{alg:full_cd} is consistent in the infinite data limit. Interestingly, even when the $\mathscr{A}$ does not permit latent variables (as in PC, GES, DAGMA), we still see convergence of \nopartition\ with \expansivecausal. We also show results for an \edgecover\ partition; this partition only accounts for edge coverage of $G$ (\textit{(i)} in Defn \ref{def:new-causal-partition}). We see the \edgecover\ partition performs comparably to the \expansivecausal\ partition. This implies that of the properties of a causal partition described in Defn.~\ref{def:new-causal-partition}, edge coverage appears to be the most important. With the exception of the PC algorithm, we also outperform the benchmark \pef{}.




\subsection{Density of superstructure \emph{G}}
\label{sec:exp2a}

\begin{figure*}[h!]
\centering
    \includegraphics[width=\linewidth]{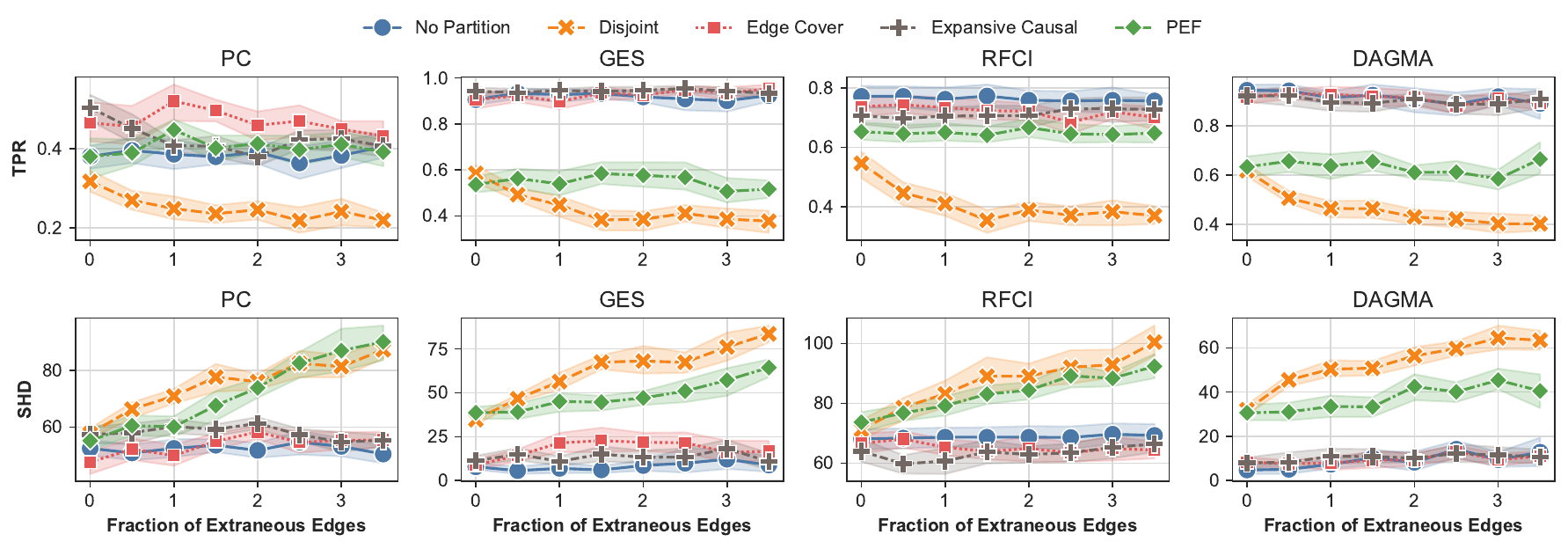}
    \caption{Experiment increasing fraction of extraneous edges in a perfect superstructure. }
    \label{fig:exp_3}
\end{figure*}
\begin{figure*}[h!]
    \centering
    \includegraphics[width=\linewidth]{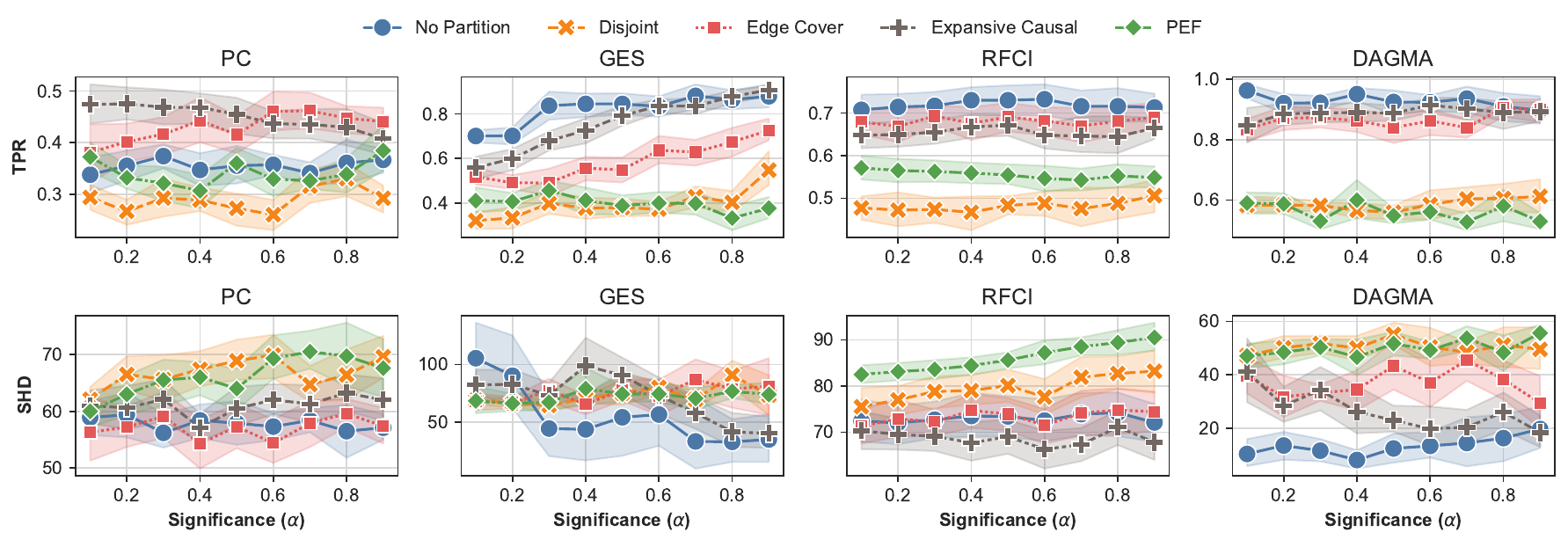}
    \caption{Increase in density of the imperfect superstructure by increasing the significant level $\bm{\alpha}$ of the PC algorithm.}
    \label{fig:exp_4}
\end{figure*}

This experiment assumes a perfect superstructure $G$. We increase the fraction of extraneous edges in $G$ and not in $G^*$. In \autoref{fig:exp_3}, we see comparable learning of \edgecover, \expansivecausal, and \nopartition. This means that although $G^*$ is increasingly obscured by $G$, and even though partitioning is done on $G$, we can still estimate close to the MEC $H^*$. 

\subsection{Imperfect superstructure \emph{G}}
\label{sec:exp4}
In this experiment we use the PC algorithm to estimate the superstructure $G$. Since the superstructure now relies on the data, it is imperfect and does not include all edges in $G^*$. We vary the \enquote{perfection} of the superstructure by increasing the the significance level $\alpha$ of the PC algorithm. 
A larger $\alpha$ means a denser superstructure and a structure that is more likely to include more edges in $G^*$. Results are shown in \autoref{fig:exp_4}. We turn off the superstructure screening step, as in \screenprojections, for this experiment. For the GES and DAGMA causal learning algorithms, \expansivecausal\ outperforms \edgecover\ slightly -- unlike in previous experiments. Still, the edge coverage property of the causal expansion accounts for most of the improvement in accuracy compared to a disjoint partition. The causal partition may provide additional benefits to learning when the superstructure $G$ is imperfect. 


\begin{figure}[h!]
\vspace{-1.5cm}
    \centering
    \subfloat[][Subset Sizes]{
         \includegraphics[width=0.48\textwidth]{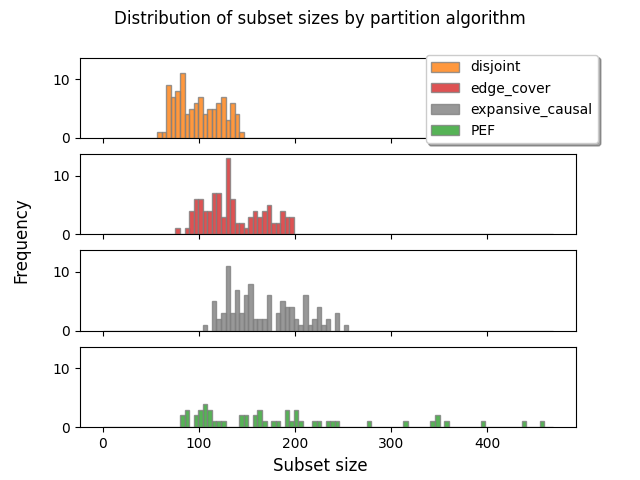}
        \label{fig:exp5a_hist}
    }
    \hfill
    \subfloat[][Subset of the topology]{
         \includegraphics[width=0.48\textwidth]{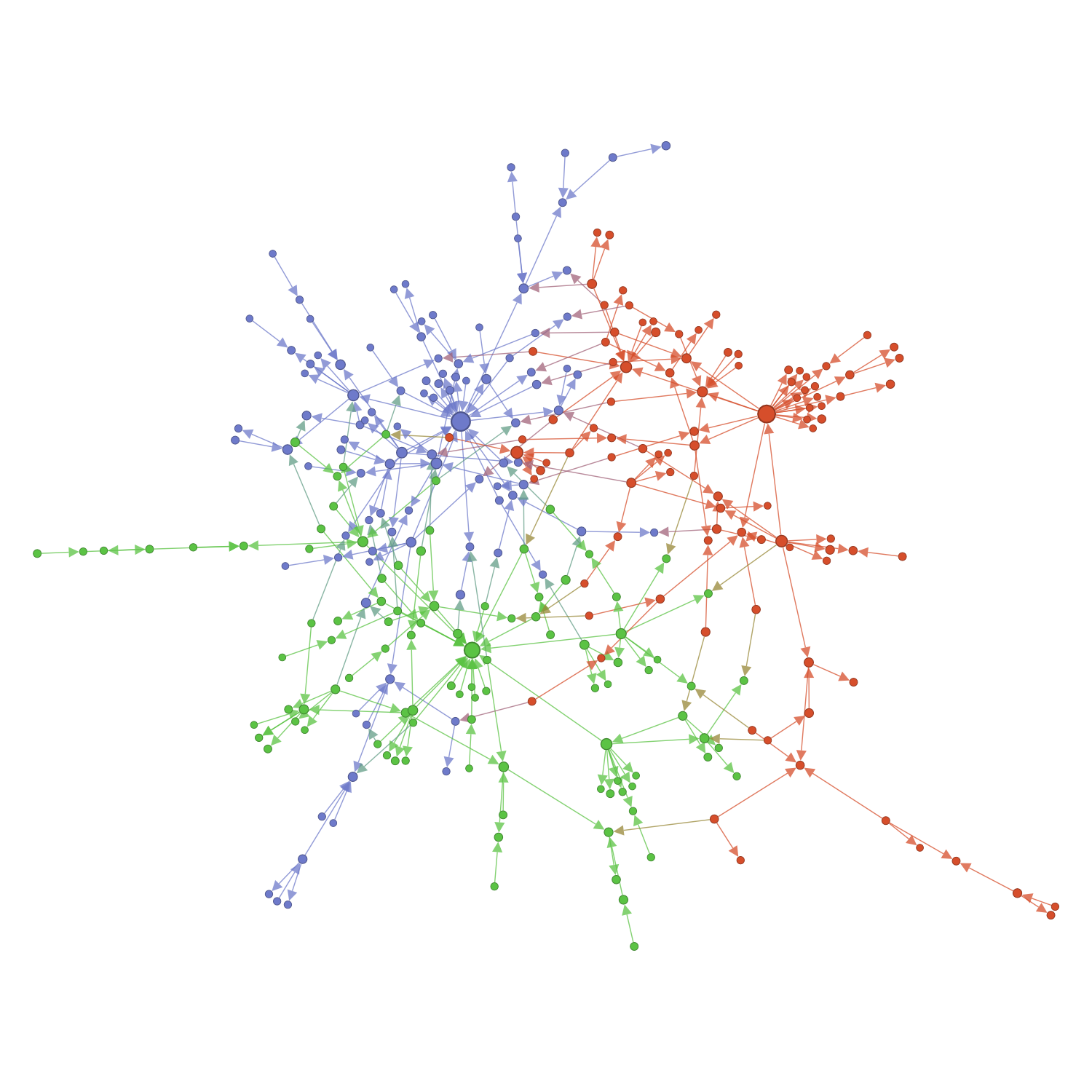}
        \label{fig:exp5a_net}   
    }
     \caption{%
        Topological structure of 10,000 node graphs defined by case \textbf{(A)}. This network has 10,000 nodes. The nodes are divided into 100-node subsets, of which there are 100. To generate the edges in the network, we first generate a Barabasi-Albert scale-free graph on each 100-node subset, and then randomly add edges connecting these subsets together.
        \textbf{(a)}
        Distribution of subset sizes for each partitioning algorithm.        %
        \textbf{(b)}
        Example communities extracted from the 10,000 node network. These three communities account for 3\% of the total nodes and 2.81\% of the total edges. The size of the node is proportional to its degree.
    }
    \label{fig:exp5a}
\end{figure}
~

\begin{table}[h!]
    \centering
    \footnotesize
    \caption{Average time and accuracy results on 10,000 node graphs with $\sim$ 10,000 edges comprised of 100 scale free networks each of size 100 (averaged over over 5 networks). The number of samples $n$ is 10,000. Dash (-) indicates the algorithm did not complete in 24 hours. This experiment was run with 2x AMD EPYC 7713 CPU @2GHz with a total of 128 cores and 256 GB of RAM.}
    \label{tab:exp5a}
\small
\begin{tabular}{crrrrr}
\toprule
   Structure Learner &Partitioning Algorithm & SHD $\downarrow$ & TPR $\uparrow$ & FPR $\downarrow$ & Time (min) $\downarrow$\\
\midrule
&No Partition     &   \textbf{857.2 $\pm$ 35.6} &  0.930 $\pm$ 0.002 &  \textbf{9.0e-6} &    11.695 $\pm$ 0.230 \\
&Disjoint         &  4382.8 $\pm$ 201.8 &  0.650 $\pm$ 0.011 &  1.0e-5 &    \textbf{0.119 $\pm$ 0.005} \\
$\mathscr{A}$ = GES &Edge Cover       &  1306.4 $\pm$ 35.4  &  0.895 $\pm$ 0.002 &  1.3e-5 &    0.148 $\pm$ 0.007 \\
&Expansive Causal &  1550.6 $\pm$ 54.3&  \textbf{0.947 $\pm$ 0.003} &  1.5e-5 &    0.163 $\pm$ 0.009 \\
&PEF              &  3775.8 $\pm$ 119.3 &  0.697 $\pm$ 0.01 &  3.7e-5 &    103.767 $\pm$ 3.067 \\
    
\midrule
    & No Partition     & - & - & - & -\\
&Disjoint         &  8519.2 $\pm$ 227.0 &  0.322 $\pm$ 0.007 &  \textbf{4.9e-5} &    \textbf{4.891 $\pm$ 8.716} \\
$\mathscr{A}$ = PC &Edge Cover       &  \textbf{7050.8 $\pm$ 195.0} &  0.443 $\pm$ 0.004 &  6.3e-5 &    10.343 $\pm$ 11.363 \\
&Expansive Causal &  8683.0 $\pm$ 401.0 &  \textbf{0.504 $\pm$ 0.014} &  7.4e-5 &    450.744 $\pm$ 187.366 \\
&PEF              &  - &  -&  - &    - \\

\midrule
    & No Partition &- &  - & - & - \\
&Disjoint         &  10294.0 $\pm$ 191.5 &  0.491 $\pm$ 0.013 &  6.5e-5 &    \textbf{0.715 $\pm$ 0.311} \\
$\mathscr{A}$ = RFCI & Edge Cover       &   \textbf{9683.4 $\pm$ 288.4} &  \textbf{0.691 $\pm$ 0.005} &  8.9e-5 &    5.430 $\pm$ 4.210 \\

& Expansive Causal &   9924.5 $\pm$ 113.8 &  0.647 $\pm$ 0.003 &  9.0e-5 &   107.788 $\pm$ 11.204 \\
&PEF              &  - &  -&  - &    - \\

\midrule
    & No Partition     & - & - & - & -\\
&Disjoint         &  3722.6 $\pm$ 274.7&  0.694 $\pm$ 0.015 &  \textbf{1.0e-6} &    \textbf{1.172 $\pm$ 0.107} \\
$\mathscr{A}$ = DAGMA&Edge Cover       &   \textbf{925.6 $\pm$ 240.8} &  \textbf{0.925 $\pm$ 0.019} &  2.0e-6 &    2.502 $\pm$ 0.234 \\
&Expansive Causal &  1828.0 $\pm$ 36.5&  0.858 $\pm$ 0.006 &  3.0e-6 &    3.593 $\pm$ 0.607 \\
&PEF              &  4135.6 $\pm$ 84.6 &  0.667 $\pm$ 0.005&  3.1e-5 &    122.405 $\pm$ 5.736 \\

\bottomrule
    \end{tabular}
\end{table}

\subsection{Number of Nodes}

\label{sec:exp5}
In this experiment we highlight the scalability of our algorithm by evaluating on networks with 10,000 nodes. We test two network structure cases here: \textbf{(A)} one hundred communities each with 100 nodes with a Barabasi-Albert scale-free topology; this is identical to the preceding experiments, except with more communities, and \textbf{(B)} hierarchical scale-free graphs which are characterized by highly connected hub nodes that are preferentially attached to other hubs. This is similar to gene regulatory networks \citep{yu2006genomic}, but these structures are more sparse than typical biological networks. For both networks we obtain 10,000 samples from the multivariate Gaussian distribution. Unlike the preceding experiments, in this experiment we show results for when the sample size is equal to the number of variables: $n=p$. 

In \autoref{tab:exp5a} we show results for \textbf{(A)} across each divide-and-conquer method and different causal discovery algorithms $\mathscr{A}$. Time to solution for the divide-and-conquer methods (\disjoint, \expansivecausal, \edgecover, and \pef) includes partitioning into subsets. The time to solution is bounded by the compute time of the largest subset (see Appendix \ref{appendix:complexity}). The distribution of subset sizes shown in Fig. \ref{fig:exp5a_hist}. As a result, many algorithms did not complete in 24 hours with \nopartition\ (PC, RFCI, DAGMA) and \pef\ (PC, RFCI). For PC, RFCI and DAGMA our \edgecover\ algorithm generally outperforms other methods in SHD and TPR while incurring a small cost in compute time compared to the fastest algorithm \disjoint. This further supports the claim that \edgecover\ may be sufficient for many problems. For GES, however, we see the increased benefit of the \expansivecausal\ partition compared to \edgecover\ for the TPR. Since \nopartition\ runs in a few minutes on this network, it may be unclear why partitioning is needed. In the next example we will see how the compute times increase dramatically when the network has a much more complex community structure. 



\begin{figure}[h!]
\vspace{-1cm}

    \centering
    \subfloat[][Subset Sizes]{
         \includegraphics[width=0.48\textwidth]{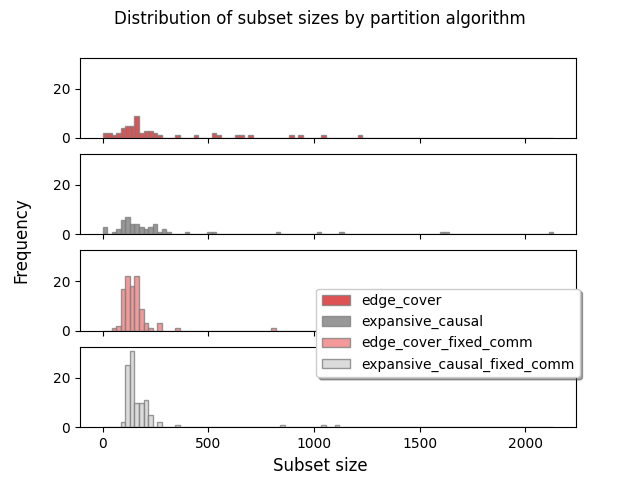}
        \label{fig:exp5b_hist}
    }
    \hfill
    \subfloat[][Subset of the topology]{
         \includegraphics[width=0.375\textwidth]{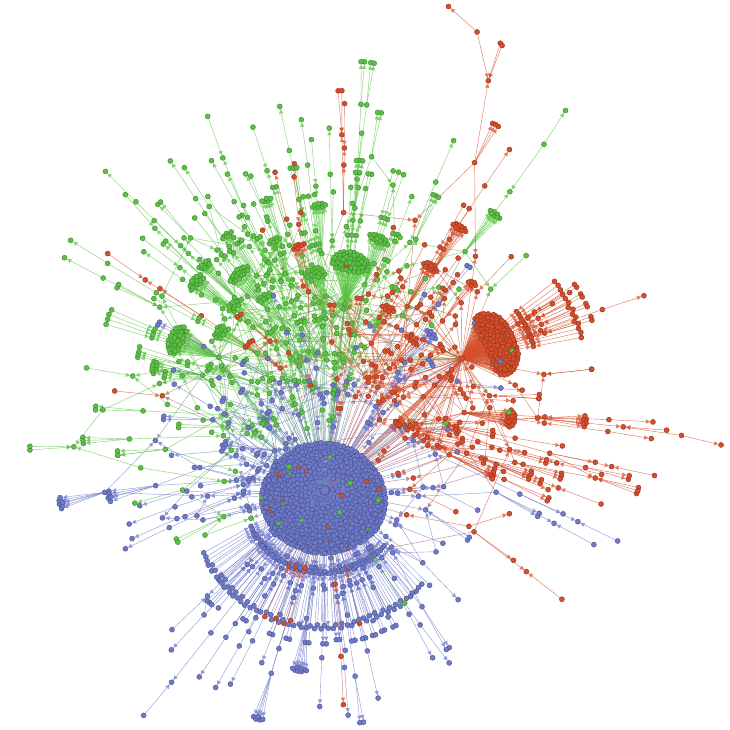}
        \label{fig:exp5b_net}   
    }
     \caption{%
        Topological structure of 10,000 node graphs defined by case \textbf{(B)}: a 10,000 node hierarchical scale-free network .
        \textbf{(a)}
        Distribution of subset sizes for each partitioning algorithm.        %
        \textbf{(b)}
       Example communities extracted from the 10,000 node network with the \disjoint\ partition. These three communities account for 25\% of the total nodes and 20.5\% of the total edges. Each community has a set of hub nodes that connected to a large number of other nodes (as seen by the large clusters of nodes in the visualization).
    }
    \label{fig:exp5b}
\end{figure}
~
\begin{table}[h!]
\footnotesize
    \centering
    \caption{Average time and accuracy results on 10,000 node hierarchical scale-free graphs with $\sim$ 10,000 edges (averaged over 5 networks). The number of samples $n$ is 10,000. We show results only for $\mathscr{A}$ = GES. This experiment was run with an AMD Zen 3 (Milan) 7543P CPU @2.8 GHz  with 64 cores and 512 GB of RAM.}
    \label{tab:exp5b}
\small
\begin{tabular}{rrrrr}
\toprule
    Partitioning Algorithm & SHD $\downarrow$ & TPR $\uparrow$ & FPR $\downarrow$ & Time (hrs.) $\downarrow$\\
\midrule
No Partition & \textbf{333.0 $\pm$ 36.8} & \textbf{0.976 $\pm$ 0.003} & \textbf{3.0e-6} & 25.46 $\pm$ 17.24 \\
Edge Cover & 1214.6 $\pm$ 40.6 & 0.913 $\pm$ 0.004 & 9.0e-6 & 1.84 $\pm$ 0.02 \\
Expansive Causal & 987.2 $\pm$ 27.1 & 0.928 $\pm$ 0.002 & 7.0e-6 & 11.96 $\pm$ 5.72 \\
Edge Cover 100 Comms & 3506.4 $\pm$ 563.9 & 0.752 $\pm$ 0.040 & 1.4e-5 & \textbf{0.04 $\pm$ 0.02} \\
Expansive Causal 100 Comms & 2510.8 $\pm$ 72.3  & 0.821 $\pm$ 0.002& 8.0e-6 & 1.97 $\pm$ 0.28 \\
\bottomrule
\end{tabular}
\end{table}

In \autoref{tab:exp5b} we show results for \textbf{(B)} with $\mathscr{A}$ = GES. 
The subsets of this network are larger and more dense compared to \textbf{(A)} (see Fig. \autoref{fig:exp5a_net} compared to Fig. \autoref{fig:exp5b_net}); the GES algorithm takes significantly longer on this network topology. 
 Our \expansivecausal\ achieves a faster time to solution compared to \nopartition\ while maintaining the second highest accuracy score. Compared to \nopartition, \expansivecausal\ provides 2.13x speedup and \edgecover\ provides 13.8x speedup. 
Note that \pef\ did not converge in 72 hours.\footnote{PEF does not leverage an initial superstructure to partition, as a result the time for partitioning is longer than our proposed methods which use an artificial superstructure.}

In \expansivecausalfixedcomm\ and \edgecoverfixedcomm\ we set the number of subsets to one hundred and ensure the size of the largest subset is smaller for each partitioning algorithm (Fig. \ref{fig:exp5b_hist}). We see significant speedup (12.9x for \expansivecausalfixedcomm\, and 606x for \edgecoverfixedcomm\ ) compared to \nopartition. However, this does come at a cost to accuracy as seen in \autoref{tab:exp5b}. We present an initial study of the subset size, speedup, and accuracy trade off in \autoref{appendix:tradeoff}. \nopartition\ achieves the best accuracy, particularly with respect to SHD and FPR, however now the compute time is close to 25 hours, motivating the need for partitioning. Finally, we note that given $n=p$ for these experiments it is probable that the limitations we see for \expansivecausal\ are due to the statistical problems that arise in this data setting. We leave understanding the sample inefficiency of partitioning algorithms to future work. 

We conclude that our methods \expansivecausal\ and \edgecover\ provide a faster time to solution on large graphs, are relatively robust to dense and imperfect superstructures, and provide comparable accuracy compared to \nopartition .  


\section{Empirical Results on Synthetically Tuned E.coli Networks}
\label{sec:experiments_ecoli}
This section contains results for 
biological networks. 
We use the topologies of \textit{E. coli} biological networks due to their availability and popularity.
To better benchmark the algorithms, we leverage a \textit{proximity-based} topology generative model from the literature proposed by~\citet{proximity}. The model was designed with the goal of generating structures with the following properties:
\begin{enumerate*}[label=\textit{(\roman*)}]
    \item small-world
    \item exponential degree distribution (i.e., scale-free),
    and
    \item presence of inherit community structures.
\end{enumerate*}
Coincidentally, these properties are also relevant for real-world biological networks~\citep{barabasi2004network, koutrouli2020guide}, thus we take advantage of this generative method. 
We seed this tuning algorithm with the known \textit{E. coli} regulatory network reconstructed from experimental data in \citet{fang2017global} to generate synthetic networks with \textit{E. coli}-like topology. See Fig. \autoref{fig:exp6_net} for a visualization of the highly connected hub nodes of an example tuned network. We impose a random causal ordering on the topology and generate data from the DAG using the multivariate Gaussian distribution described in Appendix \ref{appendix:dag_gen}. 
 

 A comparison of all algorithms is shown in \autoref{tab:ecoli} for $\mathscr{A}$ = GES. \expansivecausal\ provides 1.7x speedup compared to \nopartition. While there is a significant speedup, we 
note the decrease in accuracy for all divide-and-conquer algorithms. Still compared to other methods based on partitioning shown here, using a causal partition accelerates causal discovery and provides the best trade off in accuracy. 
We expect that scaling up to larger gene set sizes (e.g., $10^4$ genes for eukaryotic cells) will be severely 
expensive for methods without partitioning since these networks are more dense and complex than those evaluated in Section \ref{sec:exp5}.

\begin{figure}[h!]
\vspace{-1cm}
    \centering
    \subfloat[][Subset Sizes]{
        \includegraphics[width=0.48\textwidth]{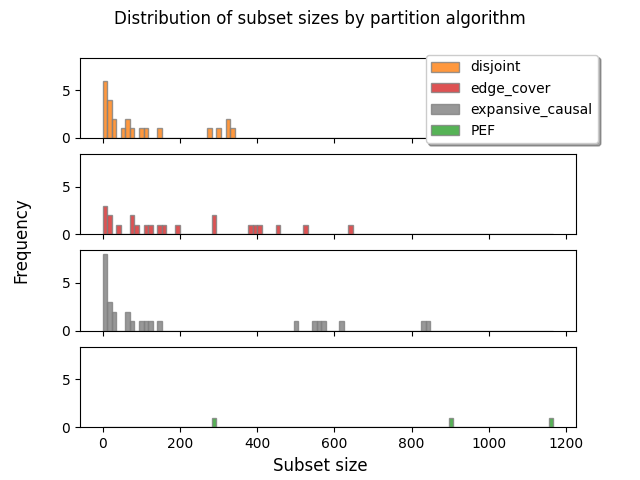}
        \label{fig:exp6_hist}
    }
    \hfill
    \subfloat[][Subset of the topology]{
        \includegraphics[width=0.4\textwidth]{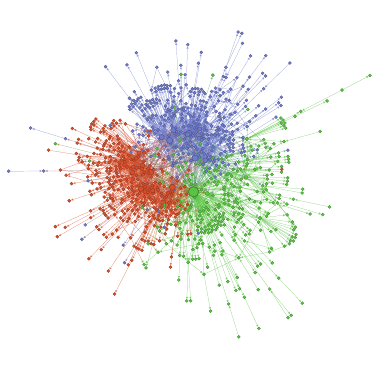}
        \label{fig:exp6_net}
    }
     \caption{%
        Topological structure of a synthetically-tuned \textit{E. coli} network.
        \textbf{(a)}
        Distribution of subset sizes for each partitioning algorithm.
        \textbf{(b)}
        Example communities extracted from the 2,332 node network with the \disjoint{} partition. These three communities account for 40.7\% of the total nodes and 40\% of the total edges.
    }
    \label{fig:exp6}
\end{figure}
~
\vspace{-0.5cm}
\begin{table}[h!]
    \centering
    \footnotesize
    \caption{Results for a synthetically-tuned E.coli network made up of 2{,}332 nodes and 5{,}691 edges. $n$=10,000 samples. We show results only for $\mathscr{A}$ = GES. This experiment was run with an Intel(R) Xeon(R) Gold 6242 CPU @ 2.80GHz with 64 cores and 192 GB of RAM.}
    \label{tab:ecoli}
    \small
    \begin{tabular}{rllll}
    \toprule
        Partitioning Algorithm & SHD $\downarrow$ & TPR $\uparrow$ & FPR $\downarrow$ & Time (hrs) $\downarrow$
        \\
    \midrule
        No Partition & \textbf{805} & \textbf{0.859} & 8.5e-5 & 11.8 
        \\
        PEF & 1,766 & 0.692 & 8.3e-5& 22.3 
        \\
        Disjoint & 3,903 & 0.479& 1.2e-4 & 23.9 
        \\
        Edge Cover & 1,791 & 0.698& 1.1e-4 & 7.1  
        \\
        Expansive Causal & 1,717 & 0.701 & \textbf{6.4e-5} & \textbf{6.9} 
        \\
    \bottomrule
    \end{tabular}
\end{table}






\section{Conclusions \& Future Directions}
\label{sec:conclusion}
We propose a divide-and-conquer causal discovery algorithm based on a novel causal partition. Our algorithm leverages a superstructure---i.e., a known or inferred structured hypothesis space. We prove the consistency of our algorithm under assumptions for the causal learner and in the infinite data limit using the Maximal Ancestral Graph~(MAG) class. Unlike existing works, our algorithm allows for the merging of locally estimated graphs \textit{without} an additional learning step.  
 Motivated by a complex scientific application space, we also show an example for gene regulatory network inference for a small organism (\textit{E.coli}). This example shows the applicability of our work to real-world networks, but we leave evaluation of our method on larger organisms (e.g, eukaryotes) to future work. 
 
One limitation of our work is the reliance on a perfect superstructure to create a causal partition. Although in Section~\ref{sec:exp4} we empirically explore the impact that using imperfect superstructures generated by the PC algorithm has on the learned output, more experiments (including other methods of generating superstructures) are needed to fully characterize the impact of learning when the superstructure does not constrain the edge set of the true causal graph. Further, while the divide-and-conquer method developed in this work can substantially reduce the runtime of performing causal discovery on large variable sets, characterizing the trade-off between time-savings and sample complexity remains an area of open work. An important open question for future research is how the partitioning described in this paper impacts the sample efficiency of causal discovery algorithms. As discussed in Section~\ref{sec:proof}, another interesting direction for future research is extending the divide-and-conquer framework presented in this paper to the setting where both interventional and observational data are available. There exist causal discovery algorithms which can leverage a mixture of observational and interventional data (such as the $\mathcal{I}$-FCI and $\Psi$-FCI algorithms) \citep{kocaoglu2019characterization,jaber2020causal}. If one uses one of these algorithms to perform local learning on the variable subsets, then this might allow one to strengthen the guarantees presented in this work beyond only recovering cause-effect pairs in the MEC of $G^*$. Given these limitations and open questions, we believe that this work provides a meaningful contribution to causal discovery at scale and to knowledge discovery for domains with high-dimensional structured hypothesis spaces.

\subsubsection*{Acknowledgments}
The authors thank Valerie Hayot-Sasson for her help with running simulations. The authors thank Bryon Aragam for helpful discussions. AS is supported by the Exascale Computing Project (17-SC-20-SC), a collaborative effort of the U.S. Department of Energy Office of Science and the National Nuclear Security Administration. This research used resources of the Argonne Leadership Computing Facility. Argonne National Laboratory’s work on the Exploration of the Potential for Artificial Intelligence and Machine Learning to Advance Low-Dose Radiation Biology Research (RadBio-AI) project was supported by the U.S. Department of Energy, Office of Science, Office of Biological and Environment Research, under contract DE-AC02-06CH11357. AD gratefully acknowledges the support of NSF DGE 2140001.

\bibliographystyle{tmlr}
\bibliography{refs/causal}
\appendix
\section*{Appendix}

\section{Definitions} 
\label{appendix:defns}
\begin{definition}[Collider on a path] 
\label{def:collider}
Given a path $P = (X_1,\dots,X_k)$ on a mixed graph $G$, a non-endpoint vertex $X_i$ is a collider on path $P$ if both edges adjacent to $X_i$ on the path have a directed or bi-directed edge pointing to $X_i$. Examples include $X_{i-1}\rightarrow X_i, X_i \leftarrow X_{i+1}$ and  $X_{i-1}\rightarrow X_i, X_i \leftrightarrow X_{i+1}$. A non-endpoint vertex which is not a collider is said to be a non-collider on that path.
\end{definition}



\section{Deferred Proofs}
\label{appendix:deferred-proofs}
\subsection{Deferred Proofs from Section~\ref{ssec:cd-on-subsets-of-variables}}
Here we prove the properties in Lemma \ref{lem:properties-of-learner-output}. \\
    \begin{enumerate}
        \item For any $x_i,x_j\in S$, the output $\mathscr{A}(X_S)$ has an edge between $x_i$ and $x_j$ if and only if there is an inducing path in $G^*$ relative to $V\setminus S$ between them. 
        \begin{proof}
            We begin by noting that by definition, $x_i$ and $x_j$ are adjacent in $L^{MAG}(G^*,S)$ if and only if there is an inducing path in $G^*$ relative to $V\setminus S$ between them \citep{zhang2008causal}. Moreover, by definition the PAG $\mathscr{A}(X_S) = PAG[L^{MAG}(G^*,S)]$ has the same adjacencies as any member of $[L^{MAG}(G^*,S)]$, and therefore the same adjacencies as $L^{MAG}(G^*,S)$. Thus $x_i$ and $x_j$ are adjacent in $\mathscr{A}(X_S)$ if and only if there is an inducing path in $G^*$ relative to $V\setminus S$ between them.
        \end{proof}
        \item For any triple $x_i, x_j, x_k \in S$ that form an unshielded collider in $G^*$ as $x_i\rightarrow x_j\leftarrow x_k$, the output $\mathscr{A}(X_S)$ will have an edge between $x_i$ and $x_j$ as well as $x_j$ and $x_k$, and both of these edges will have an arrowhead at $x_j$. 
        \begin{proof}
        We first note that for $\{x_i, x_j, x_k\}\subseteq S$, the edges $x_i\rightarrow x_j$ and $x_k\rightarrow x_j$ are inducing paths in $G^*$ relative to $V\setminus S$ and thus the pairs $x_i, x_j$ and $x_k, x_j$ are adjacent in both $L^{MAG}(G^*,S)$ and $\mathscr{A}(X_S)$. To show that both edges will have an arrowhead at $x_j$ in $\mathscr{A}(X_S)$, it thus remains to show the edges have arrowheads at $x_k$ in every $[L^{MAG}(G^*,S)]$. By property (1) of Definition~\ref{def:latent_mag}, both edges will have arrowheads at $x_k$ in $L^{\text{MAG}}(G^*,S)$. Given $\{x_i, x_j x_k\}$ form an unshielded collider in $G^*$, all sets that d-separate $x_i$ from $x_k$ in $G^*$ do not contain $x_j$. Thus given $\{x_i, x_j, x_k\}\subseteq S$, all sets that m-separate $x_i$ form $x_k$ in $L^{\text{MAG}}(G^*,S)$ do not contain $x_j$, and thus the collider is unshielded in $L^{\text{MAG}}(G^*,S)$\citep{zhang2008causal}.
        
        
        By definition of the MEC of a MAG, every element in $[L^{MAG}(G^*,S)]$ has the same unshielded colliders, so every element in $[L^{\text{MAG}}(G^*,S)]$ has arrowheads at $x_k$\citep{zhang2008completeness}. Thus the PAG $\mathscr{A}(X_S) = PAG[L^{\text{MAG}}(G^*,S)]$ has arrowheads at $x_k$ on both edges.
        \end{proof}
        
        \item For any $u,v\in S$ such that $u\sim_{G^*}v$, if $u\sim_{\mathscr{A}(S)} v$ with an arrowhead at $v$ in $\mathscr{A}(X_S)$, then $u\rightarrow v$ in $G^*$.
        \begin{proof}
            Given $u\sim_{G^*}v$ for $G^*$ a DAG, either $u\rightarrow v$ in $G^*$ or $v\rightarrow u$ in $G^*$. Assume for the sake of contradiction that $v\rightarrow u$ in $G^*$.
            By the definition of $PAG[L^{\text{MAG}}(G^*,S)]$, given $u\sim_{\mathscr{A}(X_S)} v$ with an arrowhead at $v$ in $\mathscr{A}(X_S)$, it holds that $u$ and $v$ are adjacent with an arrowhead at $v$ for every element of $[L^{\text{MAG}}(G^*,S)]$ \citep{zhang2008causal}. In particular, $u$ and $v$ are adjacent with an arrowhead at $v$ in $L^{MAG}(G^*, S)$. By definition of the latent MAG, $u$ and $v$ are adjacent with an arrowhead at $v$ in $L^{MAG}(G^*, S)$ implies that one of the following hold: (1) $u\rightarrow v$ in $G^*$, (2) $u \in \text{anc}_{G^*}(v)$ and there is an inducing path in $G^*$ between $u$ and $v$ relative to $V\setminus S$, or (3) there is some other inducing path between $u$ and $v$ but $u \not\in \text{anc}_{G^*}(v)$ and $v \not\in \text{anc}_{G^*}(u)$. If either (1) or (2) hold, then $v\rightarrow u$ in $G^*$ would imply the existence of a cycle in $G^*$, contradicting the assumption that $G^*$ is a DAG. Moreover (3) cannot hold, as given $u\sim_{G^*}v$ it must be that either $u \in \text{anc}_{G^*}(v)$ or $v \in \text{anc}_{G^*}(u)$. Thus in all three cases we arrive at a contradiction, and so we conclude that $v\not\rightarrow u$ in $G^*$, and thus that $u\rightarrow v$ in $G^*$.
        \end{proof}
    \end{enumerate}
\subsection{Deferred Proofs from Section~\ref{sec:proof}}

In this section, we consider superstructure $G$ satisfying Assumption~\ref{assume:superstructure}, a learner $\mathscr{A}$ satisfying Assumption~\ref{assume:consistent_PAG_learner}, $\{S_1,\dots, S_N\}$ a causal partition with respect to $G$ and $G^*$, and $H^*$ the output of Algorithm~\ref{alg:new-screening} on $G, \{\mathscr{A}(X_{S_i})\}_{i=1}^N$. We begin by proving property (i) in Theorem~\ref{thm:new-screen-works}.
\begin{lemma}\label{lem:main-thm-property-i}
    For any $\forall \in V$, $u\sim_{H^*}v$ if and only if $u\sim_{G^*}v$.
\end{lemma}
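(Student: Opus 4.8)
The plan is to read off the adjacency structure of $H^*$ directly from Algorithm~\ref{alg:new-screening} and then compare it with the adjacency structure of $G^*$, invoking only the defining properties of a causal partition (Definition~\ref{def:new-causal-partition}) together with Lemma~\ref{lem:properties-of-learner-output}. First I would observe that the only step of Algorithm~\ref{alg:new-screening} that inserts edges into $E^*$ is the conditional at line~\ref{line:favor_no_edge}, and that the subsequent collider-orientation loop merely rewrites end marks (it replaces $u - v$ by an oriented copy) and therefore never changes which pairs are adjacent. Consequently $u\sim_{H^*}v$ holds precisely when the candidate edge $u\starundir v$ survives the superstructure filter ($u\sim_{G} v$) and the test at line~\ref{line:favor_no_edge} passes, i.e.\ at least one $S_i$ contains $\{u,v\}$ and $u\sim_{\mathscr{A}(X_{S_i})}v$ for \emph{every} $S_i\supseteq\{u,v\}$.

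For the forward implication, suppose $u\sim_{G^*}v$. By Assumption~\ref{assume:superstructure} we get $u\sim_{G}v$, and by property (i) of a causal partition (edge coverage of $G$) some subset $S_i$ contains both $u$ and $v$. For every such $S_i$, the single edge of $G^*$ between $u$ and $v$ is by itself an inducing path relative to $V\setminus S_i$ (it has no non-endpoint vertices, so the inducing-path condition is vacuous; cf.\ Figure~\ref{fig:inducing-paths}(a)), so Lemma~\ref{lem:properties-of-learner-output}(1) yields $u\sim_{\mathscr{A}(X_{S_i})}v$. Hence the line~\ref{line:favor_no_edge} test passes and $u\sim_{H^*}v$.

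For the reverse implication I would argue by contradiction: assume $u\sim_{H^*}v$ but $u\not\sim_{G^*}v$. From the characterization above we then have $u\sim_{G}v$ together with $u\sim_{\mathscr{A}(X_{S_i})}v$ for every subset $S_i$ containing $\{u,v\}$. But property (ii) of a causal partition, applied to the pair $u,v$ (which satisfies exactly the hypotheses $u\not\sim_{G^*}v$ and $u\sim_{G}v$), supplies a subset $S_j\supseteq\{u,v\}$ on which $\mathscr{A}(X_{S_j})$ has no edge between $u$ and $v$, contradicting the previous sentence. So $u\sim_{G^*}v$, which establishes the equivalence.

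I expect the only genuine care-point to be the bookkeeping around Algorithm~\ref{alg:new-screening}: namely confirming that the orientation pass preserves adjacency, and that a one-edge path qualifies as an inducing path so that Lemma~\ref{lem:properties-of-learner-output}(1) can be applied to the edges of $G^*$. Once those two observations are in place, both directions follow immediately from the definition of a causal partition, and no analysis of edge orientations is needed for this lemma.
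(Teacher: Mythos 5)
Your proposal is correct and follows essentially the same route as the paper's proof: the forward direction uses Assumption~\ref{assume:superstructure}, edge coverage, and the fact that a single $G^*$-edge is an inducing path relative to $V\setminus S_i$ (via Lemma~\ref{lem:properties-of-learner-output}(1)), while the reverse direction invokes property (ii) of Definition~\ref{def:new-causal-partition}, which you phrase as a contradiction where the paper argues by cases on whether the candidate edge exists. Your explicit observation that the collider-orientation pass only rewrites end marks and never alters adjacencies is a point the paper leaves implicit, but the argument is otherwise the same.
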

\begin{proof}
    Consider any $u,v\in V$ such that $u\sim_{G^*}v$. Because $G$ satisfies Assumption~\ref{assume:superstructure}, $u\sim_G v$. By the definition of a causal partition, $\{S_1,\dots,S_N\}$ is edge-covering with respect to $G$ and thus $\exists i\in [N]$ such that $u,v \in S_i$. Moreover, given $u,v\in S_i$, the edge between the two nodes in $G^*$ is an inducing path with respect to $V\setminus S_i$ and so by statement (1) in Lemma~\ref{lem:properties-of-learner-output}, $u\sim_{\mathscr{A}(X_{S_i})}v$. Thus  $u\sim_{G}v$ and $u\sim_{\mathscr{A}(X_{S_i})}v$ so $u\starundir v \in E_{\text{candidates}}$. Moreover, for any subset $S_j \ni u,v$, the edge between $u$ and $v$ in $G^*$ is an inducing path with respect to $V\setminus S_j$, so $u\starundir v \in E_j$ for all $j$ such that $u,v\in S_j$. Thus an edge between $u$ and $v$ will be added to $E^*$, so $u\sim_{H^*} v$.

    Conversely, consider any $u,v,\in V$ such that $u\not\sim_{G^*} v$. If $u\not\sim_G v$, or $\not\exists i\in [N]$ such that $u\sim_{\mathscr{A}(X_{S_i})}v$, then $E_{\text{candidates}}$ will not contain an edge between $u$ and $v$ and thus neither will $E^*$. If $u\sim_G v$ and $\exists i\in [N]$ such that $u\sim_{\mathscr{A}(X_{S_i})}v$, then $E_{\text{candidates}}$ will contain an edge between $u$ and $v$. However because $\{S_1,\dots,S_N\}$ is a causal partition, by property (ii) of Definition~\ref{def:new-causal-partition} there exists some $j\in [N]$ such that $u,v\in S_j$ and $u$ and $v$ do not have an edge between them in $E_i$ the edges of output $\mathscr{A}(S_j)$. Thus no edge between $u$ and $v$ will be added to $E^*$. We thus conclude that $u\sim_{H^*} v$ if and only if $u\sim_{G^*} v$.
\end{proof}

In order to prove property (ii) of Theorem~\ref{thm:new-screen-works}, we will use the following lemma:
\begin{lemma}\label{lem:no-contradicting-directed-edges}
    For all $u,v\in V$ such that $u\rightarrow v$ in $H^*$, it holds that $u\rightarrow v$ in $G^*$. 
\end{lemma}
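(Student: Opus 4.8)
The plan is to trace any directed edge of $H^*$ back to an unshielded collider appearing in one of the local outputs $\mathscr{A}(X_{S_i})$, and then invoke Lemma~\ref{lem:properties-of-learner-output}. First I would observe that in Algorithm~\ref{alg:new-screening} the screening loop only ever adds \emph{undirected} edges to $E^*$, and the only step that introduces a directed edge is the unshielded-collider orientation loop; moreover, once an edge has been directed it can no longer satisfy the guard ``$u-v$ and $v-w$ in $E^*$'', so its orientation is never changed again. Hence, if $u\rightarrow v$ in the returned $H^*$, there is some index $i$ such that $\mathscr{A}(X_{S_i})$ contains an unshielded collider whose collider node is $v$ and one of whose two flanking endpoints is $u$ --- namely the triple whose processing fixed the orientation of the $u$--$v$ edge. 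In particular $u,v\in S_i$, and the edge between $u$ and $v$ in $\mathscr{A}(X_{S_i})$ carries an arrowhead at $v$.

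Next I would argue that $u\sim_{G^*}v$: the orientation loop fires on that triple only if the $u$--$v$ edge is present as an undirected edge in $E^*$ at that moment, so it must have been added earlier by the screening loop, and since orientation preserves adjacency we get $u\sim_{H^*}v$; Lemma~\ref{lem:main-thm-property-i} then gives $u\sim_{G^*}v$. Combining the three facts $u,v\in S_i$, $u\sim_{G^*}v$, and $u\sim_{\mathscr{A}(X_{S_i})}v$ with an arrowhead at $v$, statement~(3) of Lemma~\ref{lem:properties-of-learner-output} yields $u\rightarrow v$ in $G^*$, which is the claim.

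I expect the only delicate point to be the bookkeeping in the first paragraph: one must argue carefully that every directed edge of the \emph{final} $H^*$ genuinely corresponds to an unshielded collider present in a single local output $H_i$, even though during the orientation loop a given edge may be inspected by several triples drawn from several different subsets (and could in principle be ``claimed'' by a collider centered at $u$ rather than at $v$). Once it is pinned down that the surviving orientation of the $u$--$v$ edge comes from a triple with collider $v$ and flank $u$ living in a single $\mathscr{A}(X_{S_i})$, the rest is an immediate application of Lemmas~\ref{lem:properties-of-learner-output} and~\ref{lem:main-thm-property-i}.
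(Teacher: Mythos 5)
Your proof is correct and follows essentially the same route as the paper's: deduce $u\sim_{G^*}v$ via Lemma~\ref{lem:main-thm-property-i}, identify a subset $S_i$ whose local output contains the $u$--$v$ edge with an arrowhead at $v$ (coming from the unshielded-collider orientation step of Algorithm~\ref{alg:new-screening}), and conclude with statement~(3) of Lemma~\ref{lem:properties-of-learner-output}. The ``delicate'' bookkeeping you flag---that every directed edge of the final $H^*$ points into the collider node of the triple that oriented it and is never re-oriented---is exactly what the paper asserts implicitly with ``the definition of Algorithm~\ref{alg:new-screening} implies,'' and your explicit argument for it is sound.
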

\begin{proof}
    If the output $H^*$ contains directed edge $u\rightarrow v$, then Lemma~\ref{lem:main-thm-property-i} implies $u\sim_{G^*}v$ and the definition of Algorithm~\ref{alg:new-screening} implies $\exists i\in [N]$ such that $u\rightarrow v$ is part of an unshielded collider $u\stardir v \starleftdir w$ in $\mathscr{A}(X_{S_i})$. Given $u\sim_{G^*}v$, by statement (3) of Lemma~\ref{lem:properties-of-learner-output} the fact that $u\sim_{\mathscr{A}(X_{S_i})}v$ and $\mathscr{A}(X_{S_i})$ contains an arrowhead at $v$ implies that $u\rightarrow v$ in $G^*$.
\end{proof}

We now prove property (ii) of Theorem~\ref{thm:new-screen-works}.
\begin{lemma}\label{lem:main-thm-property-ii}
    For any unshielded collider $u\rightarrow v \leftarrow w$ in $H^*$, it holds that $u\rightarrow v\leftarrow w$ in $G^*$. 
\end{lemma}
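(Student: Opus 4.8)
The plan is to leverage Lemma~\ref{lem:no-contradicting-directed-edges} together with property~(i) of Theorem~\ref{thm:new-screen-works} (already established as Lemma~\ref{lem:main-thm-property-i}). Suppose $u\rightarrow v\leftarrow w$ is an unshielded collider in $H^*$. First I would apply Lemma~\ref{lem:no-contradicting-directed-edges} twice: since $u\rightarrow v$ in $H^*$, we get $u\rightarrow v$ in $G^*$; since $w\rightarrow v$ in $H^*$, we get $w\rightarrow v$ in $G^*$. So both edges of the triple are present in $G^*$ with arrowheads at $v$, i.e. $u\rightarrow v\leftarrow w$ in $G^*$ as a subgraph. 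It remains only to check that this triple is \emph{unshielded} in $G^*$, i.e. that $u\not\sim_{G^*}w$.

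For the unshieldedness, I would use property~(i) of Theorem~\ref{thm:new-screen-works}: since the collider $u\rightarrow v\leftarrow w$ in $H^*$ is unshielded, by definition $u\not\sim_{H^*}w$. By Lemma~\ref{lem:main-thm-property-i}, $u\sim_{H^*}w$ if and only if $u\sim_{G^*}w$, hence $u\not\sim_{G^*}w$. Combining with the previous paragraph, $u\rightarrow v\leftarrow w$ is an unshielded collider in $G^*$, which is the claim.

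I expect the argument to be essentially immediate given the lemmas already in place; there is no real obstacle, since Lemma~\ref{lem:no-contradicting-directed-edges} does the heavy lifting of transferring arrowheads from $H^*$ to $G^*$, and property~(i) transfers (non-)adjacency. The only subtlety to be careful about is making sure that ``unshielded collider in $H^*$'' is being used with the definition that requires both that the two edges point into $v$ \emph{and} that the endpoints $u,w$ are non-adjacent in $H^*$ — both pieces of that definition are needed, the first to invoke Lemma~\ref{lem:no-contradicting-directed-edges} on each edge and the second to get unshieldedness in $G^*$ via Lemma~\ref{lem:main-thm-property-i}.
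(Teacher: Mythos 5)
Your proposal is correct and matches the paper's argument, which likewise obtains both orientations by applying Lemma~\ref{lem:no-contradicting-directed-edges} to each directed edge of the collider. Your additional paragraph deriving $u\not\sim_{G^*}w$ via Lemma~\ref{lem:main-thm-property-i} is a harmless (and valid) strengthening beyond the literal claim, which only asserts the edge orientations $u\rightarrow v\leftarrow w$ in $G^*$.
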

The proof follows directly from application of Lemma~\ref{lem:no-contradicting-directed-edges}.

We conclude with the proof of property (iii):
\begin{lemma}\label{lem:main-thm-property-iii}
    For any unshielded collider $u\rightarrow v\leftarrow w$ in $G^*$,  $u\sim_{H^*}v$  and  $v\sim_{H^*}w$ and both edges have an arrowhead at $v$ in $H^*$.
\end{lemma}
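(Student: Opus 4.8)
The plan is to establish the three requirements of the statement --- $u\sim_{H^*}v$, $v\sim_{H^*}w$, and an arrowhead at $v$ on both edges --- by tracing a single unshielded collider of $G^*$ through Algorithm~\ref{alg:new-screening}. First, the adjacency part: since $u\sim_{G^*}v$ and $w\sim_{G^*}v$, Lemma~\ref{lem:main-thm-property-i} immediately gives $u\sim_{H^*}v$ and $v\sim_{H^*}w$; and since the collider is unshielded, $u\not\sim_{G^*}w$, so by the same lemma $u\not\sim_{H^*}w$, i.e.\ $\{u,v,w\}$ is an unshielded triple in $H^*$. In particular the undirected edges $u-v$ and $v-w$ are created in the first stage of the algorithm, and the adjacency structure of $E^*$ is then frozen (the orientation loop only replaces an existing edge by a directed one on the same pair), so any edge between $u$ and $v$ (resp.\ $v$ and $w$) that ever appears in $E^*$ is exactly this one.

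Second, I would locate a subset on which the collider is visible to the orientation loop. Property (iii) of a causal partition (Definition~\ref{def:new-causal-partition}) supplies some $S_i$ with $\{u,v,w\}\subseteq S_i$. By Lemma~\ref{lem:properties-of-learner-output}(2) the local output $\mathscr{A}(X_{S_i})$ contains an edge between $u$ and $v$ and between $v$ and $w$, each with an arrowhead at $v$, i.e.\ a collider pattern $u\stardir v\starleftdir w$. It then remains to argue that this pattern is \emph{unshielded} in $\mathscr{A}(X_{S_i})$, namely $u\not\sim_{\mathscr{A}(X_{S_i})}w$ --- equivalently, by Lemma~\ref{lem:properties-of-learner-output}(1), that $G^*$ has no inducing path between $u$ and $w$ relative to $V\setminus S_i$. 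Granting that, when the collider loop of Algorithm~\ref{alg:new-screening} reaches index $i$ and this triple, it finds $u\stardir v\starleftdir w$ unshielded in $H_i$ with $u-v$ and $v-w$ in $E^*$, and orients $u\rightarrow v\leftarrow w$.

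Third comes the bookkeeping of the loop. The only way the algorithm could fail to (re)orient on subset $S_i$ is if $u-v$ or $v-w$ has already been directed by an earlier collider before index $i$ is processed; but every directed edge of $H^*$ agrees with $G^*$ by Lemma~\ref{lem:no-contradicting-directed-edges}, and since $u\rightarrow v$ and $w\rightarrow v$ in $G^*$ this forces the prior orientation to already have arrowheads at $v$ --- exactly the desired conclusion. Conversely, any orientation the loop does produce for this triple places arrowheads at $v$. So in every case the output satisfies $u\sim_{H^*}v$, $v\sim_{H^*}w$, and both edges carry an arrowhead at $v$, which is the claim.

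I expect the main obstacle to be the unshieldedness step of the second paragraph: ruling out an inducing path between $u$ and $w$ relative to $V\setminus S_i$ for the subset delivered by property (iii). A plain common-cause path $u\leftarrow a\rightarrow w$ with $a\notin S_i$ is already an inducing path relative to $V\setminus S_i$, so mere containment of $\{u,v,w\}$ in $S_i$ does not by itself give an unshielded collider in the local output. The argument therefore has to exploit the additional structure of a causal partition --- e.g.\ combining property (iii) with property (ii) applied to the non-adjacent pair $u,w$, or (for a causal expansion $S'_i = S_i\cup\outerboundary(S_i)$) using the fact that such separating ancestors get pulled into the subset along with $v$. Identifying precisely which subset the orientation loop can lean on --- one that simultaneously contains $\{u,v,w\}$ and witnesses $u\not\sim_{\mathscr{A}(X_{S_i})}w$ --- is where the real work lies; everything else is a routine unwinding of Algorithm~\ref{alg:new-screening} using Lemmas~\ref{lem:properties-of-learner-output}, \ref{lem:main-thm-property-i}, and \ref{lem:no-contradicting-directed-edges}.
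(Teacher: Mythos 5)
Your skeleton is the same as the paper's: adjacencies (and $u\not\sim_{H^*}w$) from Lemma~\ref{lem:main-thm-property-i}, a subset $S_i\supseteq\{u,v,w\}$ from property (iii) of Definition~\ref{def:new-causal-partition}, arrowheads at $v$ in $\mathscr{A}(X_{S_i})$ from statement (2) of Lemma~\ref{lem:properties-of-learner-output}, and then the orientation stage of Algorithm~\ref{alg:new-screening}; your third paragraph, which handles edges already oriented by an earlier triple via Lemma~\ref{lem:no-contradicting-directed-edges}, is a detail the paper does not even spell out. But the step you explicitly leave open --- that the triple is \emph{unshielded in the local output} $H_i$, i.e.\ that $G^*$ has no inducing path between $u$ and $w$ relative to $V\setminus S_i$ --- is never closed, so as written the proposal is not a complete proof, and your worry about it is well founded. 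Property (iii) alone does not deliver it: take $G^*$ with $a\rightarrow u$, $a\rightarrow w$, $u\rightarrow v\leftarrow w$, superstructure equal to the skeleton, and the causal partition $S_1=\{u,v,w\}$, $S_2=\{a,u,w\}$ (properties (i)--(iii) all hold, (ii) vacuously since $u\not\sim_G w$); then $\mathscr{A}(X_{S_1})$ contains the projected edge $u\leftrightarrow w$, so the only subset containing the triple shows it \emph{shielded}, and neither property (ii) nor an expansion-style argument rescues that particular subset.

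The reason this does not sink the lemma in the paper is that the paper's own proof never argues local unshieldedness either: after obtaining the arrowheads at $v$ from Lemma~\ref{lem:properties-of-learner-output}(2), it invokes the orientation condition of Line~\ref{line:add-directed-edge} (orient $u\rightarrow v$ as soon as \emph{some} local output contains $u\stardir v$), so shieldedness is effectively judged against the screened edge set $E^*$ --- where the projected $u$--$w$ edge has already been removed by the superstructure/consensus screening and Lemma~\ref{lem:main-thm-property-i} gives $u\not\sim_{H^*}w$ --- rather than against $H_i$. Under that reading your argument closes immediately with what you already established in your first paragraph, and no inducing-path analysis for the pair $(u,w)$ is needed. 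Under the literal reading of the loop in Algorithm~\ref{alg:new-screening} (``Unshielded \dots in $H_i$''), the obstacle you identified is genuine and is equally a gap in the paper's proof; repairing it would require either reading the collider check against $E^*$ (as the paper implicitly does) or strengthening Definition~\ref{def:new-causal-partition}(iii) to demand a subset containing $\{u,v,w\}$ on which $u$ and $w$ are also locally non-adjacent.
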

\begin{proof}
    Given any unshielded collider $u\rightarrow v \leftarrow w$ in $G^*$, Lemma~\ref{lem:main-thm-property-i} implies that $u\sim_{H^*} v$, $v\sim_{H^*} w$, and $u\not\sim_{H^*}w$. It thus remains to show that the v-structure edges are oriented correctly in $H^*$. By the definition of a causal partition, $\exists i$ such that $\{u,v,w\}\subseteq S_i$. Thus by statement (2) in Lemma~\ref{lem:properties-of-learner-output}, $u\stardir v$ and $w\stardir v$ in $\mathscr{A}(X_{S_i})$. Thus the condition in Line~\ref{line:add-directed-edge} is satisfied so both $u\rightarrow v$ and $w\rightarrow v$ will be added to $E^*$, and thus the edges are oriented correctly in $H^*$.
\end{proof}

\subsection{Deferred Proofs from Section~\ref{ssec:causal-expansion}}

Throughout this section, we assume superstructure $G$ satisfies Assumption~\ref{assume:superstructure}. Consider $\{S_1,\dots,S_N\}$ be a vertex-covering partition of $G$ and denote by $\{S'_1,\dots,S'_N\}$ the causal expansion of $\{S_1,\dots,S_N\}$ with respect to $G$.

In order to prove Lemma~\ref{lem:causal-expansion}, we introduce several auxiliary lemmas. Proving that the causal expansion satisfies properties (i) and (iii) of Definition~\ref{def:new-causal-partition} is straightforward. These arguments are contained in  Lemmas~\ref{lem:causal-partition-property-1} and \ref{lem:causal-partition-property-4} respectively:
\begin{lemma}\label{lem:causal-partition-property-1}
    The overlapping partition $\{S'_1,\dots,S'_N\}$ is edge-covering with respect to superstructure $G$.
\end{lemma}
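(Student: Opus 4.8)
The plan is to verify directly the definition of edge-coverage: for every edge $\{u,v\}\in E(G)$, exhibit an index $i$ with $\{u,v\}\subseteq S'_i$. First I would fix an arbitrary edge $\{u,v\}\in E(G)$ and use the fact that $\{S_1,\dots,S_N\}$ is a vertex-covering partition of $G$ to obtain some index $i$ with $u\in S_i$. This is the only structural input needed.

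Next I would case-split on the location of the other endpoint $v$. In the first case $v\in S_i$ as well, and since $S_i\subseteq S'_i$ by the definition of the causal expansion ($S'_i = S_i\cup\outerboundary(S_i)$), we immediately get $\{u,v\}\subseteq S'_i$. In the second case $v\notin S_i$; but $u\in S_i$ and $v\sim_G u$ (the edge $\{u,v\}$ witnesses adjacency), so by the definition of the outer vertex boundary $v\in\outerboundary(S_i)$, hence $v\in S'_i$. Combined with $u\in S_i\subseteq S'_i$, this again gives $\{u,v\}\subseteq S'_i$. In both cases the required subset exists, so $\{S'_1,\dots,S'_N\}$ is edge-covering with respect to $G$.

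There is no real obstacle here: the argument is a one-step unwinding of the definitions of vertex-covering partition, outer boundary, and causal expansion, with a trivial case analysis on whether the second endpoint already lies in the chosen block or is pulled in by the boundary expansion. The only thing to be careful about is that adjacency in $G$ (via $\sim_G$) is exactly what makes a neighbor land in $\outerboundary(S_i)$, which is immediate from the definition of $\outerboundary$ given just before Definition~\ref{def:causal-expansion}.
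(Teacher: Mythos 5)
Your proof is correct and follows essentially the same route as the paper: vertex-covering yields an index $i$ with $u\in S_i$, and then $v$, being a $G$-neighbor of $u$, lies in $S_i\cup\outerboundary(S_i)=S'_i$. Your explicit two-case split ($v\in S_i$ versus $v\in\outerboundary(S_i)$) is just an unwound version of the paper's one-line containment argument.
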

\begin{proof}[Proof of Lemma~\ref{lem:causal-partition-property-1}]
    Consider any $u,v$ such that $u\sim_G v$. Because the original partition $\{S_1,\dots,S_N\}$ is vertex-covering, $\exists i\in [N]$ such that $u\in S_i$. Moreover, $u\sim_G v$ so $v\in \text{neighbors}(u)\subseteq S_i\cup\outerboundary S_i = S'_i$.
\end{proof}
\begin{lemma}\label{lem:causal-partition-property-4}
        Given any unshielded collider in $G^*$, $u\rightarrow v\leftarrow w$, there exists $i\in [N]$ such that $\{u,v,w\}\subseteq  S'_i$.
    \end{lemma}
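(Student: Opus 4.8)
The plan is to anchor the entire triple inside a single expanded subset by using the middle vertex $v$ of the collider as the pivot. First I would note that an unshielded collider $u\rightarrow v\leftarrow w$ in $G^*$ in particular contains the edges $u\sim_{G^*}v$ and $w\sim_{G^*}v$; since $G$ is a superstructure (Assumption~\ref{assume:superstructure}), every edge of $G^*$ is an edge of $G$, so $u\sim_{G}v$ and $w\sim_{G}v$ as well.

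Next, since $\{S_1,\dots,S_N\}$ is a vertex-covering partition of $G$, I would choose an index $i\in[N]$ with $v\in S_i$. Because $u$ is a $G$-neighbor of the vertex $v\in S_i$, the definition of the outer boundary gives either $u\in S_i$ or $u\in\outerboundary(S_i)$, hence in all cases $u\in S_i\cup\outerboundary(S_i)=S'_i$; the identical reasoning applied to $w$ yields $w\in S'_i$. Combined with $v\in S_i\subseteq S'_i$, this gives $\{u,v,w\}\subseteq S'_i$, which is exactly property (iii) of Definition~\ref{def:new-causal-partition} for the causal expansion.

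I do not anticipate a substantive obstacle: the statement is essentially a one-step consequence of the ``expand by the outer boundary'' construction together with the superstructure property. The only point requiring care is the choice of pivot subset. One must anchor on a subset containing the collider's \emph{middle} vertex $v$, since it is $v$ that is adjacent (in $G^*$, and hence in $G$) to both endpoints. Anchoring instead on a subset containing an endpoint, say one with $u\in S_i$, would place $v$ in $S'_i$ but would not guarantee $w\in S'_i$, because $w$ need not be a $G$-neighbor of $u$ (indeed $u\not\sim_{G^*}w$, though that alone does not settle whether $u\sim_G w$). So the whole content of the argument is ``pivot on $v$.''
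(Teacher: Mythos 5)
Your proof is correct and follows essentially the same argument as the paper: pick $S_i$ containing the middle vertex $v$ by vertex-covering, use Assumption~\ref{assume:superstructure} to get $u\sim_G v$ and $w\sim_G v$, and conclude $\{u,v,w\}\subseteq S_i\cup\outerboundary(S_i)=S'_i$. The observation that one must pivot on $v$ rather than an endpoint is the same (implicit) choice made in the paper's proof.
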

    \begin{proof}[Proof of Lemma~\ref{lem:causal-partition-property-4}] 
        As the original partition $\{S_1,\dots,S_N\}$ is vertex-covering, $\exists i\in [N]$ such that $v\in S_i$. Moreover as $G$ satisfies Assumption~\ref{assume:superstructure}, $u\sim_G v$ and $w\sim_G v$. Thus by definition of the expansive causal partition, $\{u, v, w\}\subseteq S'_i$.
    \end{proof}

    

Proving that the causal expansion satisfies property (ii) of Definition~\ref{def:new-causal-partition} is more involved. We first establish the following helper lemma:
\begin{lemma}\label{lem:no-internal-inducing-paths}
        Consider any $S\subseteq V$ and any $u,v \in S$ such that $u\not\sim_{G^*} v$ in DAG $G^*$. Then any path $\Pi \subseteq S$ such that $\text{length}(\Pi) > 1$ is not an inducing path between $u$ and $v$ in $G^*$ relative to $V\setminus S$. Moreover, any path $\Pi = (u, q_1, q_2,\dots, q_{k-1},q_k, v)$ such that either $\{u, q_1, q_2\} \subseteq S$ or  $\{q_{k-1}, q_k, v\} \subseteq S$ is not an inducing path between $u$ and $v$ in $G^*$ relative to $V\setminus S$.
\end{lemma}
\begin{proof}[Proof of Lemma~\ref{lem:no-internal-inducing-paths}]
    Both conditions on $\Pi$ imply the existence of non-endpoints $q, q' \in S$ adjacent along path $\Pi$. By definition of an inducing path, $q$ and $q'$ must both therefore be colliders on $\Pi$. This implies that the edge between $q$ and $q'$ in path $\Pi$ must have an arrowhead at both $q$ and $q'$ in $G^*$. However $G^*$ is a DAG and cannot contain bi-directed edges, so $q$ and $q'$ cannot both be colliders on $\Pi$, and $\Pi$ is therefore not an inducing path.
\end{proof}

\begin{figure}
\centering
\includegraphics[width=0.5\linewidth]{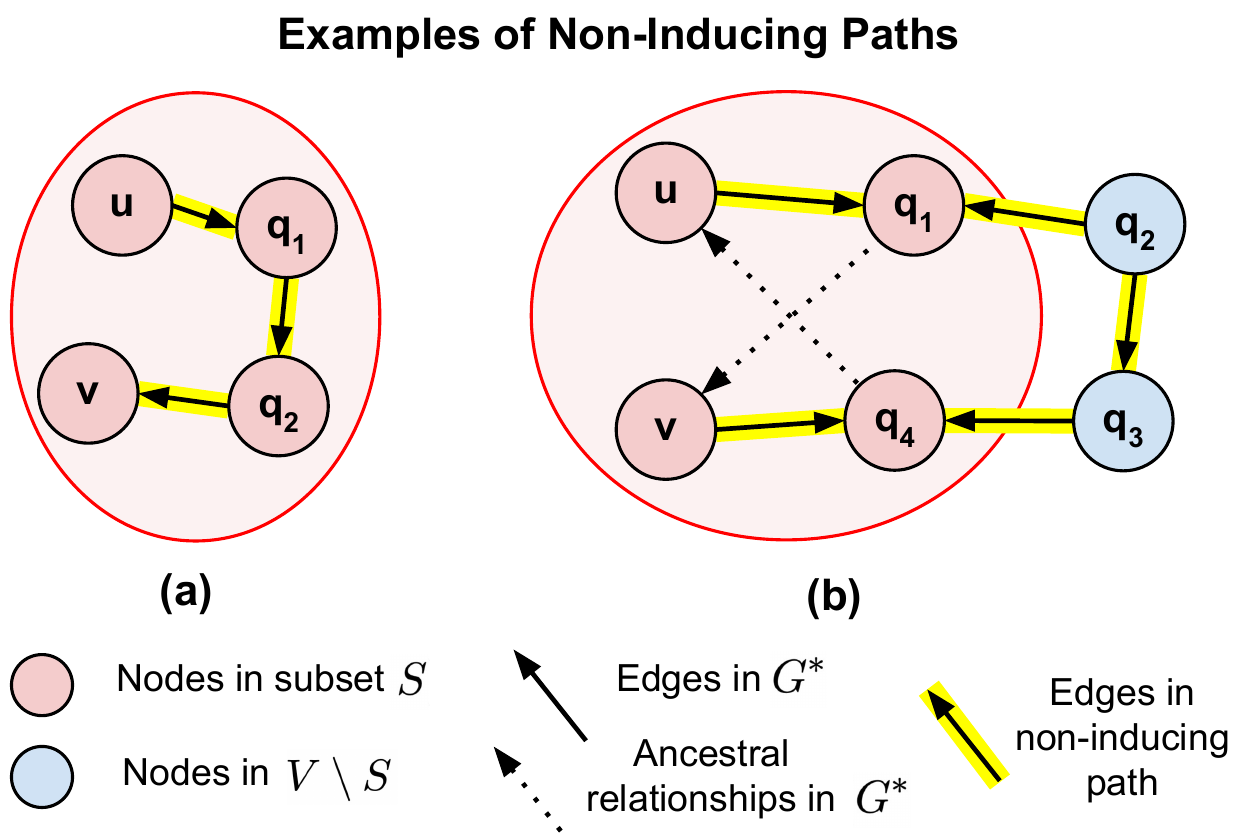}
\caption{Examples of non-inducing paths. The example in (a) illustrates the case described in Lemma~\ref{lem:no-internal-inducing-paths}. This path is not inducing because $q_1, q_2$ are non-endpoint paths in $S$, but they are not both colliders on the path. The example in (b) illustrates \textbf{Case 2} in the proof of Lemma~\ref{lem:causal-partition-property-3}. The definition of an inducing path requires that $q_1$ be an ancestor of $u$ and $q_4$ be an ancestor of $u$, but this implies the existence of a cycle in $G^*$ contains $u, q_1, v,$ and $q_4$. Thus this path cannot exist.}
\label{fig:non-inducing-paths}
\end{figure}

We now use Lemma~\ref{lem:no-internal-inducing-paths} to prove that the causal expansion satisfies property (ii) of Definition~\ref{def:new-causal-partition}:
\begin{lemma}\label{lem:causal-partition-property-3}
    Given any $u\not\sim_{G^*}v$, there exists $i\in [N]$ such that  such that $u,v\in S'_i$ and $u\not\sim_{\mathscr{A}(S'_i)} v$. 
\end{lemma}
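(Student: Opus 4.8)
\textbf{Proof plan for Lemma~\ref{lem:causal-partition-property-3}.}
The plan is to take $u\not\sim_{G^*}v$ and exhibit a single subset $S'_i$ of the causal expansion that contains both $u$ and $v$ yet on which the learner $\mathscr{A}$ reports no edge between them. First I would choose the subset: since the original partition $\{S_1,\dots,S_N\}$ is vertex-covering, pick $i$ such that $u\in S_i$; then $u\in S_i\subseteq S'_i$, and since $u\sim_G v$ would be needed for the interesting case (if $u\not\sim_G v$, then $E_{\text{candidates}}$ never contains the edge and there is nothing to screen — though to be safe I would note $v\in\outerboundary(S_i)\subseteq S'_i$ exactly when $u\sim_G v$, which is the only case we must handle), we get $v\in S'_i$ as well. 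So $u,v\in S'_i$. By statement (1) of Lemma~\ref{lem:properties-of-learner-output}, it now suffices to show there is \emph{no} inducing path in $G^*$ between $u$ and $v$ relative to $V\setminus S'_i$.

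Suppose for contradiction such an inducing path $\Pi = (u, q_1,\dots,q_k, v)$ exists. Since $u\not\sim_{G^*}v$, the path has length at least $2$, so $k\ge 1$ and the neighbors $q_1$ (of $u$) and $q_k$ (of $v$) exist. The key structural observation is that $u\in S_i$ forces $q_1\in S'_i$: indeed $q_1\sim_G u$ (edges of $G^*$ are in $G$ by Assumption~\ref{assume:superstructure}), so $q_1\in S_i\cup\outerboundary(S_i) = S'_i$. I do \emph{not} get $q_k\in S'_i$ for free, because $v$ may only lie in the boundary $\outerboundary(S_i)$ rather than the core $S_i$, so its neighbors need not be pulled in — this asymmetry is exactly why the lemma is subtle and why the construction expands around $S_i$ rather than symmetrically. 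Now split into cases on the length. \textbf{Case 1: length $2$}, i.e. $\Pi = (u, q_1, v)$ with $q_1\in S'_i$. Then $q_1$ is the unique non-endpoint; by the definition of an inducing path relative to $V\setminus S'_i$, every non-endpoint in $S'_i$ must be a collider on $\Pi$ and an ancestor of $u$ or of $v$. So $q_1$ is a collider: $u\stardir q_1\starleftdir v$ has arrowheads at $q_1$ from both sides, meaning in the DAG $G^*$ we have $u\rightarrow q_1$ and $v\rightarrow q_1$. But an inducing path requires $q_1\in\text{anc}_{G^*}(u)\cup\text{anc}_{G^*}(v)$, which together with $u\rightarrow q_1$ (or $v\rightarrow q_1$) produces a directed cycle in $G^*$ — contradiction. \textbf{Case 2: length $\ge 3$}, so the path is $(u,q_1,\dots,q_k,v)$ with $k\ge 2$ and $\{u,q_1,q_2\}\subseteq S'_i$ (using $q_1\in S'_i$; and $q_2\sim_G q_1$ need not put $q_2$ in $S'_i$ — wait, this requires more care). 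Here the cleanest route is to invoke Lemma~\ref{lem:no-internal-inducing-paths} with $S = S'_i$: if I can show $\{u, q_1, q_2\}\subseteq S'_i$ then that lemma immediately gives a contradiction. The containment $u,q_1\in S'_i$ is established; for $q_2$, I would argue that $q_1$, being a non-endpoint of an inducing path lying in $S'_i$, must be a collider, and then — this is the delicate point — trace whether $q_2$ is forced into $S'_i$. If $q_1\in S_i$ (the core), then $q_2\sim_G q_1$ gives $q_2\in S'_i$ and we are done by Lemma~\ref{lem:no-internal-inducing-paths}. If instead $q_1\in\outerboundary(S_i)\setminus S_i$, then $q_2$ may escape $S'_i$; in that case I would instead run the ancestor/cycle argument directly as in Case 1 — since $q_1$ is a collider with $u\rightarrow q_1$ in $G^*$, and $q_1$ must be an ancestor of $u$ or $v$; being an ancestor of $u$ contradicts acyclicity immediately, and being an ancestor of $v$ combined with the rest of the path structure (which I would chase using the collider/non-collider alternation forced by the inducing-path definition together with acyclicity) also yields a directed cycle, illustrated by the situation in Figure~\ref{fig:non-inducing-paths}(b).

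The main obstacle, and the place I would spend the most care, is this last subcase: handling an inducing path whose first internal vertex $q_1$ sits in the \emph{boundary} of $S_i$ but not its core, so that Lemma~\ref{lem:no-internal-inducing-paths} does not directly apply at the $(u,q_1,q_2)$ end. The resolution I anticipate is to exploit the collider structure at $q_1$: since $u\rightarrow q_1$ in $G^*$ and $q_1$ must be an ancestor of $u$ or $v$, acyclicity of $G^*$ forces $q_1\in\text{anc}_{G^*}(v)\setminus\text{anc}_{G^*}(u)$; one then walks along $\Pi$ from $q_1$ toward $v$, using that each $S'_i$-internal non-endpoint is a collider (hence has two parents among its path-neighbors in $G^*$) and each collider must be an ancestor of $u$ or $v$, to eventually force a directed path from $v$ back to $u$ or a directed cycle — contradicting $u\not\sim_{G^*}v$ and acyclicity. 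Once the non-existence of any inducing path relative to $V\setminus S'_i$ is established, statement (1) of Lemma~\ref{lem:properties-of-learner-output} finishes: $\mathscr{A}(X_{S'_i})$ has no edge between $u$ and $v$, which is property (ii) of Definition~\ref{def:new-causal-partition}.
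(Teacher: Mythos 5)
Your plan hinges on a claim that is actually false: that the single subset $S'_i$ built around a core containing $u$ (i.e.\ $u\in S_i$, so $v$ and all $G$-neighbors of $u$ land in $S'_i$) already admits no inducing path between $u$ and $v$ relative to $V\setminus S'_i$. You correctly flag the delicate subcase --- $q_1\in\outerboundary(S_i)\setminus S_i$ with $q_2\notin S'_i$ --- but the proposed rescue (``walk along $\Pi$ and force a cycle'') does not go through, because non-endpoints of $\Pi$ that lie \emph{outside} $S'_i$ are completely unconstrained by the inducing-path definition: they need not be colliders, so no parent/ancestor structure propagates past $q_1$, and $q_1$'s required ancestorship of $v$ can be realized by off-path edges without creating any cycle. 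Concretely, take $G^*$ with edges $u\rightarrow q_1$, $q_2\rightarrow q_1$, $q_2\rightarrow v$, $q_1\rightarrow x$, $x\rightarrow v$, a superstructure that additionally contains $u\sim_G v$, and a partition with $S_i=\{u\}$, so $S'_i=\{u,q_1,v\}$ while $q_2,x\notin S'_i$. Then $u\not\sim_{G^*}v$, yet $(u,q_1,q_2,v)$ is an inducing path relative to $V\setminus S'_i$ (its only $S'_i$-internal vertex $q_1$ is a collider and an ancestor of $v$ via $x$), so $u\sim_{\mathscr{A}(X_{S'_i})}v$ and your chosen subset fails to witness the non-adjacency. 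Your length-$2$ case and the subcase $q_1\in S_i$ (via Lemma~\ref{lem:no-internal-inducing-paths}) are fine; the gap is exactly the subcase you identified as the main obstacle.

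The paper's proof avoids this by not committing to one subset: it cases on the distances of $u$ and $v$ to $\outerboundary(S'_i)$, and in the problematic configurations (its Cases 3 and 4, which include the example above) it passes to a \emph{different} subset $S'_j$ whose core contains $v$ (available because $v\in S'_i\setminus S_i$ and the original partition is vertex-covering, with $u\in S'_j$ since $u\sim_G v$). In the hardest situation it assumes edges appear in both $\mathscr{A}(X_{S'_i})$ and $\mathscr{A}(X_{S'_j})$, extracts a collider/ancestor prefix from the inducing path in one subset and a suffix from the other, and derives a directed cycle in $G^*$ --- a contradiction that requires the two paths jointly and cannot be obtained inside a single subset. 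To repair your argument you would need to incorporate this second-subset step (or an equivalent mechanism); as written, the lemma you are implicitly trying to prove (``the subset whose core contains $u$ always works'') is simply not true.
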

\begin{proof}[Proof of Lemma~\ref{lem:causal-partition-property-3}]
    Consider some $u,v\in V$ such that $u\not\sim_{G^*}v$ and $u\not\sim_{G}v$. Recall that by Assumption~\ref{assume:consistent_PAG_learner}, for any subset $S'_i$, $u\sim_{\mathcal{A}(S'_i)}$ if and only if there is an inducing path between $u$ and $v$ in $G^*$ relative to $V\setminus S'_i$. Thus to prove Lemma~\ref{lem:causal-partition-property-3}, it suffices to show that $\exists i\in [N]$ such that no inducing path exists between $u$ and $v$ in $G^*$ relative to $V\setminus S'_i$. By Lemma~\ref{lem:no-internal-inducing-paths}, any path $\Pi$ in $G^*$ of length greater than 1 such that $\Pi \subseteq S'_i$ cannot be such an inducing path. As $u\not\sim_{G^*}v$, all paths between $u$ and $v$ in $G^*$ have length at least 1. Thus to prove Lemma~\ref{lem:causal-partition-property-3}, it suffices to show that $\exists i\in [N]$ such that no inducing path $\Pi$ with $\Pi \cap \{V\setminus S'_i\}\neq \emptyset$ exists between $u$ and $v$ in $G^*$ relative to $V \setminus S'_i$.
    
    By Lemma~\ref{lem:causal-partition-property-1}, $\exists i\in [N]$ such that $u,v\in S'_i$. For any $u\in S \subseteq V$, denote by $\text{dist}_{G^*}(u, \outerboundary S)$ the shortest-path distance from $u$ to any node $v \in \outerboundary(S)$. In other words, $\text{dist}_{G^*}(u, \outerboundary S)$ is the minimum number of edges between a node $u$ and any node $w\not\in S$. Note that for any $u\in S$, $\text{dist}_{G^*}(u, \outerboundary S) \geq 1$.

    We consider four cases, parameterized by the distance from the endpoints $u,v$ to $\outerboundary S'_i$. Note that these four cases cover all possible positionings of $u$ and $v$ within $S'_i$. Thus to prove Lemma~\ref{lem:causal-partition-property-3}, we must show that each case implies the existence of some $S' \in \{S'_1,\dots,S'_N\}$, not necessarily equal to $S'_ii$, such that $u\not\sim_{\mathscr{A}(S')} v$.
    \begin{description}
        \item[Case 1.] $\max\{\text{dist}_{G^*}(u, \outerboundary S'_i),\text{dist}_{G^*}(v, \outerboundary S'_i)\} >2$
        \item[Case 2.] $\text{dist}_{G^*}(u, \outerboundary S'_i)=\text{dist}_{G^*}(v, \outerboundary S'_i) = 2$.
        \item[Case 3.] $\text{dist}_{G^*}(u, \outerboundary S'_i)=2$, and $\text{dist}_{G^*}(v, \outerboundary S'_i)  = 1$.
        \item[Case 4.] $\text{dist}_{G^*}(u, \outerboundary S'_i)=\text{dist}_{G^*}(v, \outerboundary S'_i) = 1$.
    \end{description}
    We now show that in each case, there exists some $S' \in \{S'_1,\dots,S'_N\}$ such that $u\not\sim_{\mathscr{A}(S')} v$.
    \paragraph{Case 1.} $\max\{\text{dist}_{G^*}(u, \outerboundary S'_i),\text{dist}_{G^*}(v, \outerboundary S'_i)\} >2$ implies that for any path $\Pi$ between $u,v$, either $\Pi \subseteq S'_i$, or that $\Pi$ contains a prefix $\{u, q_1, q_2\}\subseteq S'_i$, or that $\Pi$ contains a suffix $\{q_{k-1},q_k, v\}\subseteq S'_i$. In all of these cases, Lemma~\ref{lem:no-internal-inducing-paths} implies that $\Pi$ is not an inducing path between $u$ and $v$ in $G^*$ relative to $V\setminus S'_i$. Thus $u\not\sim_{\mathscr{A}(S'_i)}v$.

    \paragraph{Case 2.} $\text{dist}_{G^*}(u, \outerboundary S)=\text{dist}_{G^*}(v, \outerboundary S) = 2$ implies that for any path $\Pi$ between $u,v$, either $\Pi \subseteq S'_i$ or that $\Pi$ contains a prefix $\{u, q_1\}\subseteq S'_i$ and suffix $\{q_k, v\}\subseteq S'_i$. If $\Pi \subseteq S'_i$, then it is not an inducing path. 
    
    Consider the case when $\Pi$ contains a prefix $\{u, q_1\}\subseteq S'_i$ and suffix $\{q_k, v\}\subseteq S'_i$ and assume for the sake of contradiction that $\Pi$ is an inducing path between $u$ and $v$ in $G^*$ relative to $V\setminus S'_i$. Both $q_1$ and $q_k$ are non-endpoint vertices on $\Pi \cap S'_i$. They must therefore be colliders on $\Pi$ as well as ancestors of at least one of $u$ or $v$. Since $q_1$ be a collider on $\Pi$, it must be that $u\rightarrow q_1$ so $u\in \text{anc}_{G^*}(q_1)$, where
    \[
        \text{anc}_{G^*}(x) \equiv \{z\in V: z\text{ is an ancestor of } x \text{ in } G^*\}.
    \]
    Moreover, $q_1$ must be an ancestor of either $u$ or $v$, and because $u\in \text{anc}_{G^*}(q_1)$ it cannot be that $q_1$ is an ancestor of $u$ as this would imply the existence of a cycle in $G^*$. Thus it must be that $q_1 \in \text{anc}_{G^*}(v)$. However, we similarly conclude that as $q_k$ be a collider on $\Pi$, it must be that $q_k\leftarrow v$so $v\in \text{anc}_{G^*}(q_k)$. Moreover $q_k$ must be an ancestor of either $u$ or $v$, and $q_k$ cannot be an ancestor of $v$ as $G^*$ is acyclic, so $q_k \in \text{anc}_{G^*}(u)$.

    However we have thus concluded that $u\in \text{anc}_{G^*}(q_1)$, $q_1 \in \text{anc}_{G^*}(v)$, $v\in \text{anc}_{G^*}(q_k)$, and $q_k \in \text{anc}_{G^*}(u)$. This implies the existence of a cycle in $G^*$, and thus cannot occur. Thus we conclude that no such path $\Pi$ can be an inducing path between $u$ and $v$ in $G^*$ relative to $V\setminus S'_i$. Thus $u\not\sim_{\mathscr{A}(S'_i)}v$.

    \paragraph{Case 3.} Recall that by definition of the expansive causal partition, $S'_i = S_i \cup \outerboundary(S_i)$ for original vertex-covering partition $\{S_1,\dots,S_N\}$, where the outer boundary $\outerboundary(S_i)$ is defined by the edges in superstructure $G$. Given $\text{dist}_{G^*}(v, \outerboundary S'_i) = 1$, $\exists z\not\in S'_i$ such that $v\sim_{G^*}z$. Moreover, as $G$ satisfies Assumption~\ref{assume:superstructure}, this implies $v\sim_{G}z$. Thus by definition of the expansive causal partition it must be that $v\in S'_i \setminus S_i$. As the original partition $\{S_1,\dots,S_N\}$ is vertex-covering, this implies $\exists j\in [N]\setminus\{i\}$ such that $v\in S_j$. Moreover, as $u\sim_{G} v$, this implies $u,v\in S'_j$ and that in $S'_j$, $\text{dist}(v,\outerboundary(S'_j)) \geq 2$ and $\text{dist}(u,\outerboundary(S'_j)) \geq 1$. If $\text{dist}(v,\outerboundary(S'_j)) > 2$ or $\text{dist}(u,\outerboundary(S'_j)) > 1$, then either \textbf{Case 1} or \textbf{Case 2} respectively imply that $u\not\sim_{\mathscr{A}(S'_j)}v$, which would conclude the proof. It thus remains to consider the case where $\text{dist}(v,\outerboundary(S'_j)) = 2$ and $\text{dist}(u,\outerboundary(S'_j)) = 1$.

    We thus have the following setup: by assumption of \textbf{Case 3}, $\text{dist}_{G^*}(u, \outerboundary S'_i) = 2$ and $\text{dist}_{G^*}(v, \outerboundary S'_i) = 1$. Then by the above arguments, we have shown $j\neq i$ such that  $\text{dist}_{G^*}(v, \outerboundary S'_j) = 2$ and $\text{dist}_{G^*}(u, \outerboundary S'_j) = 1$. Assume by way of contradiction that $u\sim_{\mathscr{A}(S'_i} v$ and $u\sim_{\mathscr{A}(S'_j} v$. Thus by Assumption~\ref{assume:consistent_PAG_learner}, there must exist $\Pi_i$ and inducing path between $u$ and $v$ with respect to $V\setminus S'_i$ and $\Pi_j$ an inducing path between  $u$ and $v$ with respect to $V\setminus S'_j$.
    
    As $\text{dist}_{G^*}(u, \outerboundary S'_i) = 2$, $\Pi_i$ must contain a prefix $\{u, q_i\}\subseteq \Pi_i \cap S'_i$ where $q_i\neq v$. By definition of an inducing path $q_i$ must be a collider on $\Pi_i$ in $G^*$, so $u \in \text{anc}_{G^*}(q_i)$, and $q_i$ must be an ancestor of either $v$ or $u$. As $G^*$ is acyclic and  $u \in \text{anc}_{G^*}(q_i)$, $q_i$ cannot be an ancestor of $u$ and must therefore be an ancestor of $v$: $q_i\in \text{anc}_{G^*}(v)$.

    Similarly, as $\text{dist}_{G^*}(v, \outerboundary S'_j) = 2$, $\Pi_j$ must contain a suffix $\{q_j, v\} \subseteq \Pi_j\cap S'_j$ such that $q_j\neq u$. Moreover by an analogous argument to the above, $v \in \text{anc}_{G^*}(q_j)$ and $q_j \in \text{anc}_{G^*} u$.

    We have therefore concluded the following: $\exists q_i, q_j \in V$ such that $u \in \text{anc}_{G^*}(q_i)$,  $q_i\in \text{anc}_{G^*}(v)$, $v \in \text{anc}_{G^*}(q_j)$, and $q_j \in \text{anc}_{G^*} (u)$. However this implies the existence of a cycle in $G^*$, which contradicts the assumption that $G^*$ is a DAG. Thus it cannot hold that both $u\sim_{\mathscr{A}(S'_i)} v$ and $u\sim_{\mathscr{A}(S'_j)} v$, so we conclude $\exists S' \in \{S'_1,\dots,S'_N\}$ such that $u\not\sim_{\mathscr{A}(S')} v$.

    \paragraph{Case 4.} Given $\text{dist}_{G^*}(u, \outerboundary S'_i)=\text{dist}_{G^*}(v, \outerboundary S'_i) = 1$, $\exists z\not\in S'_i$ such that $u\sim_{G^*} z$. As superstructure $G$ satisfies Assumption~\ref{assume:superstructure} this implies $u\sim_G z$ and thus by definition of the expansive causal partition, implies $u\in S'_i\setminus S_i$. As the original partition was vertex-covering, this implies $\exists j\neq i$ such that $u\in S_j$. Thus by definition of the expansive causal partition, $u\in S'_j$ and $\text{dist}_{G^*}(u, \outerboundary S'_j) \geq 2$. Moreover as $u\sim_G v$, $v\in S'_j$ as well.
    
    If $\text{dist}_{G^*}(u, \outerboundary S'_j) > 2$, then \textbf{Case 1} implies $u\not\sim_{\mathscr{A}(S'_j)} v$. If $\text{dist}_{G^*}(u, \outerboundary S'_j) = 2$ and $\text{dist}_{G^*}(v, \outerboundary S'_j) = 2$, then \textbf{Case 2} implies $u\not\sim_{\mathscr{A}(S'_j)} v$. If $\text{dist}_{G^*}(u, \outerboundary S'_j) = 2$ and $\text{dist}_{G^*}(v, \outerboundary S'_j) = 1$, then the argument in \textbf{Case 3} implies the existence of $k\neq j$ such that either $u\not\sim_{\mathscr{A}(S'_j)} v$ or $u\not\sim_{\mathscr{A}(S'_k)} v$.
    
    We have thus concluded in each case that $\exists S' \in \{S'_1,\dots,S'_N\}$ such that $u\not\sim_{\mathscr{A}(S')} v$, and so the statement of Lemma~\ref{lem:causal-partition-property-3} holds.
\end{proof}

Lemma~\ref{lem:causal-expansion} follows directly from Lemmas~\ref{lem:causal-partition-property-1}, ~\ref{lem:causal-partition-property-4}, and ~\ref{lem:causal-partition-property-3}.

\section{Finite Sample Effects} 
\label{appendix:finite-sample-effects}

\begin{algorithm}[t]
\small
\KwIn{a superstructure $G$, a set of PAGS $\{H_i = (S_i, E_i)\}_{i=1}^N$, a matrix of observations $X$.}
\KwResult{$H^* = (V, E^*)$ a PAG}
    Initialize $V = \cup_{i=1}^N S_i$; $E_{\text{candidates}}\gets \cup_{i=1}^N E_i$; $E^*\gets \emptyset$\\
    \ForEach{$u,v$ such that $\{u\starundir  v\}\in E_{\text{candidates}}$}{
        \tcp{If an edge between $u$ and $v$ appears in the learned output on all subsets containing $u$ and $v$, add edge to output graph.}
        \If{$\forall i$ s.t. $S_i \supseteq \{u,v\}$, $u\sim_{\mathcal{A}(S_i)}v$\label{line:favor_no_edge}}{
            \tcp{If edge appears oriented in output, add oriented edge to $E^*$.}
            \eIf{$\exists i$ such that $E_i \ni \{u \stardir v\}$}{
                $E^* \gets E^{*} \cup \{ u\rightarrow v\}$\label{line:add-directed-edge}\;
                }
            {$E^* \gets E^{*} \cup \{ u\circundir v\}$\;}
        }
    }
    $H^*\gets (V, E^*)$\\
    \While{$H^*$ contains cycle $\mathcal{C}$}{
        $H^*\gets \scoreanddiscard(H^*, \mathcal{C}, \{S_1,\dots,S_N\},X)$\;
    }
    \Return $H^* = (V,E^*)$\;
\caption{\screenprojectionsfinite$(G, \{H_i\}_{i=1}^N, X)$}
\label{alg:screening_finite}
\end{algorithm}

While the theoretical results in Section~\ref{sec:proof} only apply to the infinite data regime, in this section we discuss heuristics for addressing the effects of learning with finite samples and describe a practical algorithm for real-world causal discovery problems. In the finite data setting, there two key ways that finite samples cause divergence from the idealized assumptions studied in Section~\ref{sec:proof}: (1) the superstructure may be imperfect and (2) the result of learning over a local subset may not be a latent projection and therefore the merged graph may contain cycles. We describe our finite sample screening procedure in Algorithm \ref{alg:screening_finite}. In \scoreanddiscard, we resolve cycles by discarding the edge corresponding to a the lowest score, where the score is related to the log-likelihood of the data with and without each edge in the cycle.

\textbf{Imperfect Superstructure} : In real-world causal discovery applications, one may wish to learn a superstructure $G$ from data \citep{constantinou2023impact}. Several algorithms for learning a superstructure from data exist; many, including the PC algorithm, are more easily parallelized than greedy score-based learners and thus can be run on the global variable set in reasonable time ~\citep{zarebavani2019cupc, le2016fast} . However, when the superstructure $G$ is learned from data, it may be imperfect, i.e. there may exist edges in $G^*$ which are not in $G$. If the superstructure is missing a large fraction of the ground-truth edges, the step in \screenprojections, which discards edges not in the superstructure may significantly reduce the rate of true positive edges returned by the algorithm, with the effect growing more severe with more imperfect superstructures. Thus in the finite sample limit, if working with a superstructure which is suspected to be highly imperfect, one option is to simply omit the step in \screenprojections, which discards edges not in the superstructure. In Section~\ref{sec:exp4}, we examine the impact of learning imperfect superstructures from data, and show while imperfect superstructures do impact learning significantly, the expansive causal partition is most effective out of all partition schemes. 

\textbf{Potential cycles}: When the result of learning over a subset is not a latent projection, the algorithm presented in Section~\ref{sec:proof} may fail to return a DAG. In particular, even if the output $\mathscr{A}(X_{S_i})$ is a DAG on every subset $S_i$, the output of  \screenprojections\ may contain cycles. However, it is possible to localize these cycles; if the output $\mathscr{A}(X_{S_i})$ is a DAG on every subset $S_i$, then any cycle in the output of \screenprojections$(G, \{\mathscr{A}(X_{S_i})\}_{i=1}^N)$ will have some edge $(u,v)$ such that one of the two endpoints lies in the overlap of partition $\{S_1,\dots,S_N\}$, i.e. $\exists i\neq j$ such that  $\{u,v\} \cap \{S_i\cap S_j\} \neq \emptyset$. 

Using this observation about the location of all cycles in the output of \screenprojections, adopt the following procedure. If the output of \screenprojections\ contains a cycle, we find all edges in that cycle which intersect with the overlap of partition $\{S_1,\dots,S_N\}$. We then rank these edges using a scoring function and discard the lowest-ranked edge. While a variety of edge scoring functions may be deployed for this step, in this work we assess edges using the log-likelihood induced by the linear structural equation
\begin{equation}
\label{eq:linear_gaussian}
    X_j = \sum_{i=1}^p W^{(G)}_{ij} X_i + \varepsilon_j
\end{equation}
where $W_{ij}^{(G)}$ denotes the weighted adjacency matrix of a DAG $G$ and $\varepsilon_j \sim \mathcal{N}(0,\sigma_j^2)$ denotes additive Gaussian noise. Then joint distribution of $(X_1\ldots X_p)$ is a multivariate Gaussian distribution $\mathcal{N}(0,\Sigma)$ where $\Sigma=WW^T$. The log-likelihood under this model is 
\begin{equation}
\label{eq:loglikelhood}
    l(W,\Sigma) = \sum_{j=1}^p \Bigg[-\frac{n}{2}\log(\sigma_j)^2-\frac{1}{2\sigma^2_j}||X_j-\textbf{X}W_j||^2\Bigg]
\end{equation}
In order to score an edge $(i,j)$, we compare the log-likelihood at the least squares estimates (LSE) of the regression coefficients ($\hat{W}_{ij})$ in Eq. \ref{eq:linear_gaussian} of two different DAGs: $G^{i,j}$ which contains edge $(i,j)$, and $G^{0,j}$ in which we remove edge $(i,j)$ so that $i$ is no longer a parent of $j$. Edge $(i,j)$ is then scored by how much including $i$ as a parent of $j$ increases the log-likelihood of $X_j$ under the linear structural equation. The likelihood based score is outlined in Algorithm \ref{alg:loglikelihood}. The full procedure for cycle resolution is outlined in Algorithm \ref{alg:score-and-discard}.
\begin{algorithm}[t]
\KwIn{a graph $G$, $\mathcal{C}$ a list of edges comprising a cycle in $G$, $\left\{S_i\right\}_{i=1}^N$ a partition of the nodes of $G$, a matrix of observations $X$}
\KwResult{a modified copy of $G$ which does not contain cycle $\mathcal{C}$}


    $\hat{V} \gets \bigcup_{i,j = 1}^{N} \{S_i\cap S_j\}$
    \tcp*[r]{overlapping nodes}
    
    $\hat{E} \gets \{\}$
    \tcp*[r]{overlapping edges}

    \ForEach{$(u,v) \in \mathcal{C}$}{
        \lIf{$u \in \hat{V} \text{ or } v\in \hat{V}$}{
            $\hat{E} \gets \hat{E} \cup \{(u,v)\}$
        }
    }
    
    $\hat{e} \gets \argmin_{(u,v) \in \hat{E}} \texttt{loglikelihood\_score}(u, v, G, X)$\;
    $G.\text{removeEdge}(\hat{e})$\;
    \Return $G$\;

\caption{\scoreanddiscard} 
\label{alg:score-and-discard}
\end{algorithm}

\begin{algorithm}[t]
\caption{\loglikelihoodscore$(i,j,G,X)$\label{alg:loglikelihood}}
\KwIn{a node $i$, a node $j$, a graph $G$, a matrix of observations $X$} 
\KwResult{a score based on the likelihood of graph given the data in the presence and absence of edge $(i,j)$}
\tcp{least squares estimates of Eq. \ref{eq:linear_gaussian} }
    $\hat{W}^{(G^{i,j})} \gets$ LSE($X_j, G^{i,j}$) \\
    $\hat{W}^{(G^{0,j})} \gets$ LSE($X_j,G^{0,j}$ ) \\
    $\Sigma \gets cov(X)$    \tcp{covariance matrix of X}
    \tcp{log-likelihoods from Eq. \ref{eq:loglikelhood}}
    $l_{ij} \gets l(\hat{W}^{(G^{i,j})},\Sigma)$  \\
    $l_0 \gets l(\hat{W}^{(G^{0,j})},\Sigma)$ \\
    score $\gets l_{ij} - l_0$ \\
    \Return score
\end{algorithm}



    


In the case when the detected cycle has length two, i.e. there exist edges $(i,j)$ and $(j,i)$, we adopt the methodology of \citet{gu2020learning} and use the risk inflation criterion (RIC) to determine whether to discard one or both of the edges forming the cycle. In this setting we compare three models: $G^{i,j}$ in which $i$ is a parent of $j$, $G^{j,i}$ in which $j$ is a parent of $i$, and $G^0$ in which neither edge appears. We then compute the RIC score for each model, which balances the log-likelihood with a sparsity-promoting term penalizing the total edges in the graph. If the model $G^0$ out-performs both $G^{i,j}$ and $G^{j,i}$, then both edges are removed from the graph. If at least one of the models $G^{i,j}$, $G^{j,i}$ out-performs $G^{0}$, then the better-performing edge is retained and the other edge is discarded.  For further details on using the RIC score to assess edges, we direct readers to \citet{gu2020learning}.




\section{Computational Complexity of the Divide-and-Conquer Method} 
\label{appendix:complexity}
The 
motivation for the divide-and-conquer method developed in this work is that existing causal discovery algorithms' computational complexity grows prohibitively large as the number of variables in the dataset increases. The computational cost of the divide-and-conquer method includes the expense of producing the initial  partition $\{D_i\}_{i=1}^N$, constructing the causal expansion $\{D_i\cup \partial_{\text{out}}(D_i)\}_{i=1}^N$, running some causal discovery algorithm on each of the subsets $D_i\cup \partial_{\text{out}}(D_i)$, and merging the results using Algorithm~\ref{alg:new-screening}. 

For most reasonable choices of partitioning algorithms, the dominant cost in the divide-and-conquer procedure will be running causal discovery on each subset in the expansive causal partition. A key aspect of the divide-and-conquer method's speedup is that the problem of running causal discovery on each subset of the expansive causal partition is embarassingly parallelizable. For any causal discovery algorithm, let $\mathcal{F}(\cdot)$ describe the worst-case runtime as a function of the number of random variables in the input, and let $p$ denote the number of random variables in the global dataset. Running causal discovery on $N$ subsets in parallel on a machine with $N$ processors reduces the dominant cost from $\mathcal{F}(p)$ to $\max_{i\in [N]}\mathcal{F}(|D_i| 
+|\partial_{\text{out}}(D_i)|)$. In settings where $P<N$ processors are used, the wall-clock time for performing the parallelized causal discovery on subsets using $P$ processors is given by Brent's law \citep{smith1993design} \footnote{The first term in the expression is the dominant cost in time if the probelm is run on $N$ processes. The second term is the time if the problem is run in serial divided by the actual number if processes available $P$} and is 
\[
    O\left(\max_{i\in [N]}\mathcal{F}(|D_i| +|\partial_{\text{out}}(D_i)|) + \frac{1}{P}\sum_{i=1}^N\mathcal{F}(|D_i| +|\partial_{\text{out}}(D_i)|)\right).
\]
For all causal discovery algorithms satisfying Assumption~\ref{assume:consistent_PAG_learner} known to the authors at the time of publication, the computational complexity $\mathcal{F}(\cdot)$ is dramatically super-linear such that $\sum_{i=1}^N\mathcal{F}(|D_i| +|\partial_{\text{out}}(D_i)|) \ll \mathcal{F}(p)$, as shown below for the FCI algorithm. 

As an example, we describe the computational complexity of the full divide-and-conquer procedure in a simplified setting. Consider a variable set $V$ of cardinality $p$, and assume the superstructure $G$ reflects strong community structure. Specifically, consider the stochastic block model setting: each variable in $V$ belongs to one of two equally-sized hidden communities and that for any two variables $u, v\in V$, $G$ contains an edge between $u$ and $v$ with probability $q_{\text{in}}$ if $u$ and $v$ belong to the same community, and probability $q_{\text{out}}$ if they belong to different communities. We consider a regime where these communities can be efficiently recovered with high probability: assume
\[
    q_{\text{in}} = \alpha /p, \quad q_{\text{out}} = \beta /p
\]
for $\alpha, \beta > 0 $ and $\sqrt{\alpha}-\sqrt{\beta}$ sufficiently large. This regime is well-studied, and in this setting known clustering algorithms exactly recovery the underlying partition with high probability in nearly-linear time $O(p\log^2(p))$ (see e.g. \citet{abbe2015community, wang2020nearly}). Employing these clustering algorithms, with high probability we recover an initial partition $D_1, D_2$ such that $|D_1| = |D_2| = p/2$. Moreover, in expectation $|\partial_{out}(D_1)|=|\partial_{out}(D_2)| = p\beta/4$, so conditioned on exact recovery the size of the clusters in the causal expansion are $(1/2 + \beta/4)p$ in expectation.

We consider running causal discovery using the FCI algorithm, which satisfies Assumption~\ref{assume:consistent_PAG_learner}. While the FCI algorithm does not perform \textit{every} pairwise conditional independence test, in the worst case its runtime still scales exponentially with the size of the input variable set, e.g. $\mathcal{F}(p) = c^p$ for $c\in (1,2)$ \citep{spirtes2001anytime}. When run in parallel on two processors, the wall-clock time will be exponential in the size of the subsets of the expansive causal partition: conditioned on exact recovery, for any fixed $\delta \in (0, 1-\beta/4)$, $|D_i \cup \partial_{out}(D_i)| \leq (1/2 + \beta/4 + \delta)p$ with high probability. This corresponds to reducing runtime from $c^p$ to $c^{(1/2 + \beta/4 + \delta)p}$. The complexity of merging the results of the causal discovery output using Algorithm~\ref{alg:new-screening} is a function of the number of learned edges and maximum learned degree. In the SBM setting, with high probability its runtime is $O(p(p+q)p^3) = O(\alpha(\alpha+\beta)p^2)$.

In total, the time to run the divide-and-conquer procedure when parallelizing the causal discovery step is
\[
    O\left(p\log^2(p) + c^{(1/2 + \beta/4 + \delta)p}+ \alpha(\alpha+\beta)p^2\right)
\]
with high probability. In the large variable regime, the exponential term $c^{(1/2 + \beta/4 + \delta)p}$ dominates the runtime of the divide-and-conquer procedure. We compare this with the runtime of the FCI algorithm on the global variable set, which is $O(c^p)$. For small values of $\beta$, the divide-and-conquer method reduces runtime compared with the runtime of FCI on the global variable set from $c^p$ to $\approx \sqrt{c^p}$, which represents considerable computational savings in the regimes of interest when $p$ is large. This small-$\beta$ regime corresponds to the setting where few intra-cluster edges are present.

\section{Controlling Maximum Subset Size}
\label{appendix:subset-size}
A key factor in accuracy-timing trade-offs is controlling the size of the largest subset in the partition. Here we observe that the largest subset produced by the causal expansion in Section~\ref{sec:causal_partition} is governed by specific connectivity properties of the initial partition on which it is built. In particular, for graphs with strong community structure, if the initial  partition is strongly correlated with community structure, then the resultant subsets in the causal expansion will not be much larger than any of the subsets in the input.

For the causal expansion defined in Section~\ref{sec:causal_partition}, the maximum size of any subset is controlled by the sizes of the subsets in the input partition and their corresponding vertex expansion values. For any set $S$ such that $|S| \leq |V|/2$, the vertex expansion of $S$ in graph $G$ is defined as
\[
    h(S) \equiv \frac{\outerboundary(S)}{|S|}.
\]
If the input expansion $\{S_1,\dots,S_N\}$ satisfies $|S_i|\leq |V|/2$ for all $i\in [N]$, then the size of  subsets $\{S'_1,\dots,S'_N\}$ in the causal expansion is controlled as
\[
    \max_{i\in [N]}|S'_i| \leq \max_{j\in [N]} (1+h(S_j))|S_j|.
\]
In particular, if the superstructure $G$ has strong community structure and the initial partition $\{S_1,\dots,S_N\}$ is constructed appropriately, then the subsets of the causal expansion will not be dramatically larger than those in the initial partition. See Appendix \ref{appendix:tradeoff}.
\section{Experimental Setup}
\label{appendix:exp-setup}
\subsection{DAG generation}
\label{appendix:dag_gen}
Ground truth DAGs, $G^*$, are synthetically created using a Barabasi-Albert scale-free model \citep{barabasi2003scale}. 
A random topological ordering is imposed on the nodes so that the graph is acyclic. Data is generated assuming a Gaussian noise model: $(X_1,...,X_p)^T=((X_1,...,X_p)W)^T +\epsilon$ where $\epsilon \sim \mathcal{N}(0,\sigma^2_p)$. $W$ is an upper-triangular matrix of edge weights where $w_{ij} \neq 0 $ if and only if $i\rightarrow j$ is an edge in $G^*$. The variance $\sigma^2$ is uniformly sampled from $(0,1]$. Each column vector $X_i$ represents the data distribution for a variable corresponding to a node $i$ in $G^*$. For our experiments we create graphs ($p$=50) with two communities, where each community has a Barabasi-Albert scale-free topology and communities are connected using preferential attachment.  Any cycles created by this are removed to ensure the graph is a DAG. 

\subsection{Partitioning Algorithm Parameter settings}
\label{appendix:part_settings}
In this section we describe the parameter settings for the partitioning algorithms: \disjoint, \edgecover, \expansivecausal\ and \pef. The \disjoint\ partition is \texttt{networkx.greedy\_modularity}. For networks with 100 communities of size 100 nodes, we set the \texttt{best\_n} and \texttt{cutoff} both to 100. For all other experiments we do not set these parameters. These parameters are also used for the \edgecover\ and \expansivecausal\ partitions, and there are no additional parameters that need to be set here. For \pef{}, the minimum size of each community is set in order to select the optimal partitioning. For networks with 100 communities of size 100 nodes, this is set to $1\%$ of the size of the node set. For all other experiments this is set to $5\%$ of the size of the node set. 

\subsection{Causal Discovery Algorithm Parameter settings}
\label{appendix:cd_settings}
In this section we describe all the parameter settings for the causal discovery algorithm $\mathscr{A}$ used for local learning. The four algorithms we chose are GES, PC, RFCI, and DAGMA. For GES, PC, and RFCI we use the R implementations in the \texttt{pcalg} library. For PC and RFCI the significance level $\alpha$ was set to $0.001$. For GES, \texttt{fixedGaps} which controls which edges are added in the greedy search is set to all the edges not in the superstructure, \texttt{maxDegree} is not limited, and \texttt{adaptive} is set to \texttt{\enquote{triples}}. For DAGMA, we used the \texttt{LinearModel} with L2 loss. During optimization \texttt{lambda1} was set to $0.02$, while all other parameters were set to the default. These parameters were consistent across all experiments.

\subsection{Details on merging DAG subgraphs}
Causal discovery algorithms GES, PC and DAGMA do not satisfy Assumption \ref{assume:consistent_PAG_learner} and instead output a DAG (for the PC algorithm we randomly choose a graph in the MEC). Despite this, we still employ a screen operation when merging subgraphs in a similar manner to Algorithm \ref{alg:new-screening}. However, given there are no consistency guarantees over the observed subset of nodes for these algorithms, it is possible that after merging the resultant directed graph contains cycles. Therefore for these algorithms we employ the algorithm discussed in Appendix \ref{appendix:finite-sample-effects} which accounts for the presence of cycles. For these learners Algorithm \ref{alg:screening_finite} now takes in a set of DAGs instead of PAGs and returns a DAG. 

\section{Time and Accuracy trade offs} 
\label{appendix:tradeoff}

\begin{figure}
    \centering
    \includegraphics[width=0.49\linewidth]{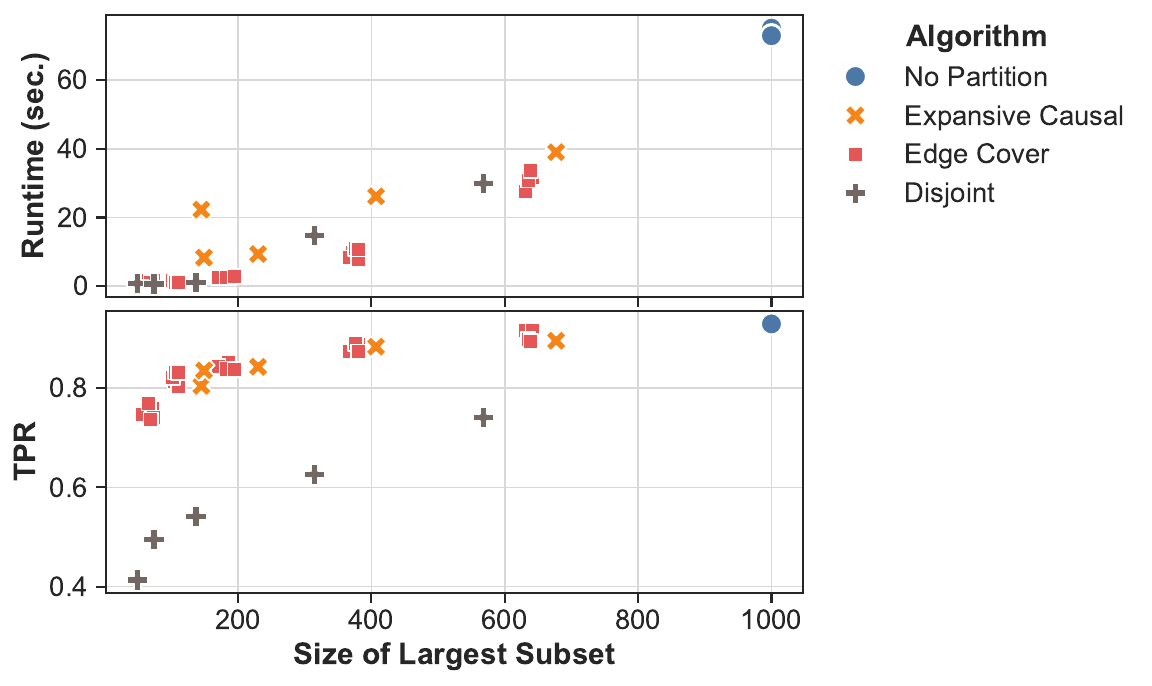}
    \includegraphics[width=0.49\linewidth]{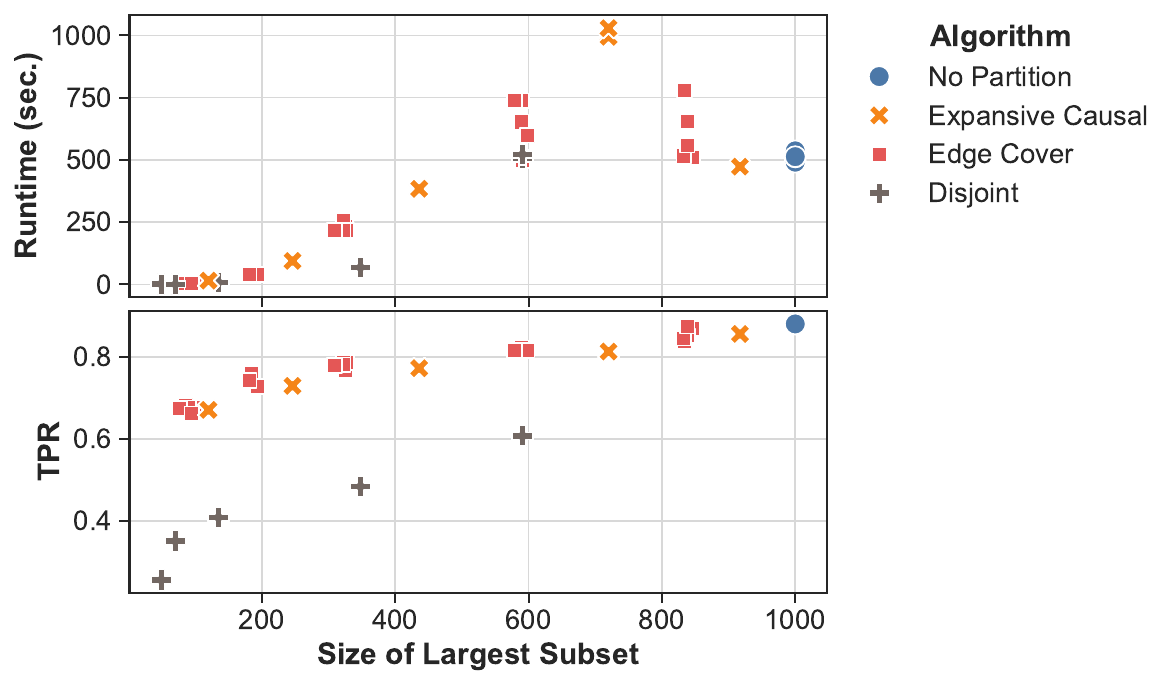}
    \caption{
        \textbf{Left:}
            Accuracy and time trade-off for 1,000 node hierarchical scale-free graphs with 1,000 samples.
        \textbf{Right:}
            Accuracy and time trade-off for 1,000 graph with ten communities of size 100 with scale-free topology with 1,000 samples. We see that certain subset sizes take unexpectedly long for GES learner.
    }
    \label{fig:tradeoff}
\end{figure}

The computational bottleneck for divide-and-conquer algorithms is the size of the largest subset: $\max_{i}|S_i|$ for a partition $\{S_1\dots S_N\}$. This is because we expect causal discovery algorithms to converge to an estimated graph faster for smaller variables sets (the graph space defined by a smaller variable set is smaller). However, we observe that the convergence of GES appears to be a function of \textit{both} size of the subset, and the topology of $G^*$. Fig. 
\ref{fig:tradeoff} (left)
shows the time to solution and TPR as the size of the biggest subset increases. For this study, we use a 1,000 node hierarchical scale-free graph. This is equivalent to the types of graphs in Section \ref{sec:exp5}. To control the size of the subsets we fix the number of communities and resolution for the greedy modularity disjoint partition -- we sweep through five different disjoint partitions, increasing size of the largest subset. Here, we see expected scaling behavior -- as the size of the largest subset increases so does the time solution. The largest time to solution is for the non-partitioned method on the entire 1,000 node graph. This means that partitioning the graph always enables some scaling. The \expansivecausal\ and \edgecover\ partitions are extensions of each \disjoint\ partition. We observe that for our proposed partition methods (Expansive Causal Partition and Edge Cover) we do not increase the size of the largest subset significantly (this aligns with notes in Appendix E), and we benefit from a significant boost in accuracy.  In Fig. \ref{fig:tradeoff} (right)
we run the same study but with a 1,000 node graph with 10 communities, each with size of 100 and scale-free topology. This is equivalent to the types of graphs in Section \ref{sec:exp1} through Section \ref{sec:exp4}, but with more communities. Here, we observe good scaling when the size of the largest partition is small and close to the size of the natural communities. However beyond this, the time to solution increases to be even larger than the non-partitioned method. This suggest that certain `bad' subsets incur a longer convergence time for the GES causal discovery algorithm. We hypothesize this is related to violation of causal sufficiency of these subsets -- subsets that contain more confounders (unobserved common causes) outside may result in sub-optimal convergence in the GES learner. Note that this result is not due to our causal partition or the divide-and-conquer methodology, but rather because of the use of the GES learner in this setting. Still, since we can control the size of the subsets with the disjoint partition we can still achieve accuracy and time benefits with GES as shown in our empirical results.


\end{document}